\newtheorem*{definition}{Definition}
\newtheorem{prop}{Proposition}
\newtheorem{inv}{Invariant}
\newtheorem*{example}{Example} 
\newtheorem{theorem}{Theorem}
\newtheorem*{theorem*}{Theorem}
\newtheorem{corollary}{Corollary}
\DeclareMathOperator{\diag}{diag}
\newcommand*\samethanks[1][\value{footnote}]{\footnotemark[#1]}
\title{Over-parameterized Neural Networks Implement Associative Memory}
\title{Overparameterized Neural Networks \\ Implement Associative Memory}
\author{Adityanarayanan Radhakrishnan \thanks{Laboratory for Information \& Decision Systems, and 
 Institute for Data, Systems, and Society, 
 Massachusetts Institute of Technology, Cambridge, MA 02139}
\and Mikhail Belkin \thanks{Department of Computer Science and Engineering, The Ohio State University,  Columbus, OH 43210}
\and Caroline Uhler \samethanks[1] 
}
\begin{document}

\maketitle

\begin{abstract}
Identifying computational mechanisms for memorization and retrieval of data is a long-standing problem at the intersection of machine learning and neuroscience.  Our main finding is that standard overparameterized deep neural networks trained using standard optimization methods implement such a mechanism for real-valued data.  Empirically, we show that: (1) overparameterized autoencoders store training samples as attractors, and thus, iterating the learned map leads to sample recovery; (2) the same mechanism allows for encoding sequences of examples, and serves as an even more efficient mechanism for memory than autoencoding.  Theoretically, we prove that when trained on a single example, autoencoders store the example as an attractor.  Lastly, by treating a sequence encoder as a composition of maps, we prove that sequence encoding provides a more efficient mechanism for memory than autoencoding.
\end{abstract}

\section{Introduction}
Developing computational models of \textit{associative memory}, a system which can recover stored patterns from corrupted inputs, is a long-standing problem at the intersection of machine learning and neuroscience.  An early example of a computational model for memory dates back to the introduction of Hopfield networks \cite{HopfieldNetwork, Little}.  Hopfield networks are an example of an \textit{attractor network}, a system which allows for the recovery of patterns by storing them as attractors of a dynamical system.  In order to write patterns into memory, Hopfield networks construct an energy function with local minima corresponding to the desired patterns.  To retrieve these stored patterns, the constructed energy function is iteratively minimized starting from a new input pattern until a local minimum is discovered and returned. 

While Hopfield networks can only store binary patterns, the simplicity of the model allowed for a  theoretical analysis of capacity \cite{HopfieldCapacity}.  In order to to implement a form of associative memory for more complex data modalities, such as images, the  idea of storing training examples as the local minima of an energy function was extended by several recent works \cite{MetaLearningEnergyModels, HintonDeepBeliefNets, HintonBookChapter, EnergyModelsImplicit, RuslanBoltzmann}.  Unlike Hopfield networks, these modern methods do not guarantee that a given pattern can be stored and typically lack the capacity to store patterns exactly; see e.g.~\cite{MetaLearningEnergyModels}.  

Our main finding is that standard over-parameterized neural networks trained using standard optimization methods can implement associative memory.  In contrast to energy-based methods, the storage and retrieval mechanisms are automatic consequences of training and do not require constructing and minimizing an energy function.  

\textbf{Interpolation Alone is Not Sufficient for Implementing Associative Memory.} While in recent machine learning literature (e.g., \cite{CloserLookAtMemorization, RethinkingGeneralization}) the term  memorization is often used interchangeably with interpolation, the ability of a model to perfectly fit training data, note that memorization is stronger and also requires a model to be able to recover training data. In general, interpolation does not guarantee the ability to recover training data nor does it guarantee the ability to associate new inputs with training examples. Fig.~\ref{fig: Figure 1}a shows an example of a function that interpolates training data, but does not implement associative memory: there is no apparent method to recover the training examples from the function alone.  On the other hand, Fig.~\ref{fig: Figure 1}b gives an example of a function that implements memory: the training examples are retrieved as the range of the function.

\begin{figure*}[!t]
\centering
    \begin{subfigure}[b]{\textwidth}
         \centering
     \includegraphics[height=2in]{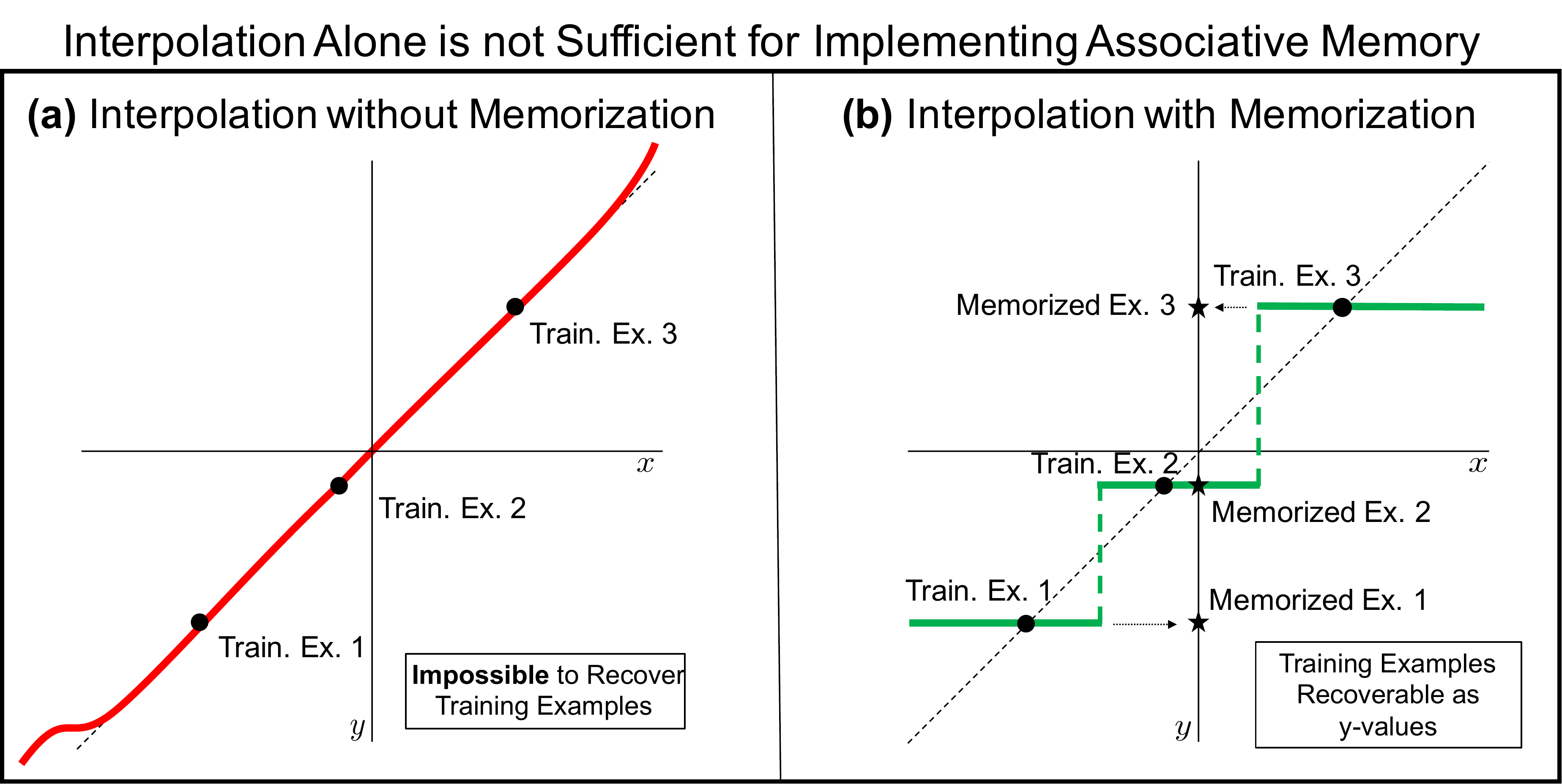}
     \end{subfigure}
     \begin{subfigure}[b]{\textwidth}
         \centering
     \includegraphics[height=2in]{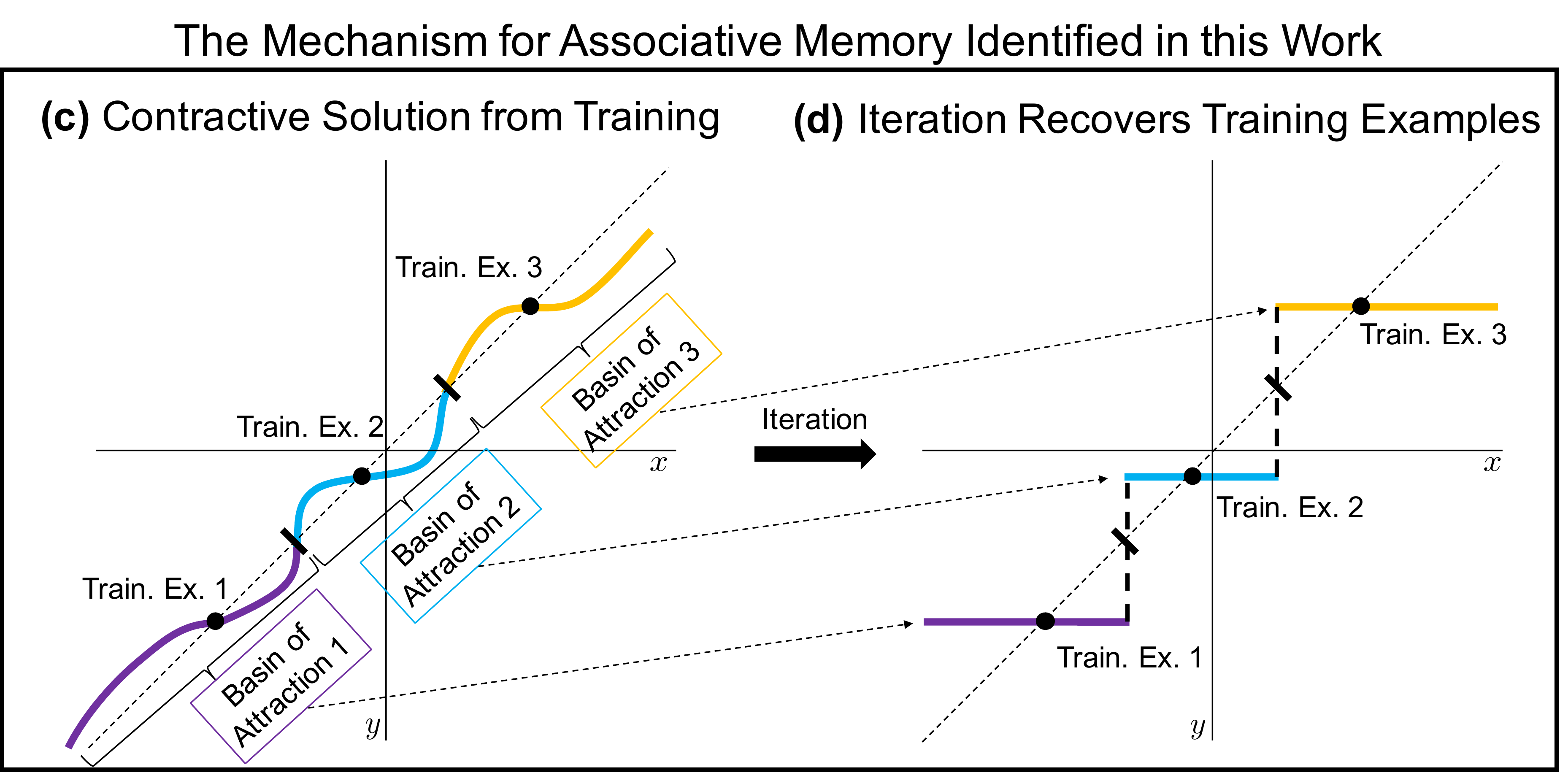}
     \end{subfigure}

\caption{The difference between associative memory and interpolation is described in (a, b); the mechanism identified in this work by which overparameterized autoencoders implement  associative memory is described in (c, d). Training examples are shown as black points, the identity function is shown as a dotted line and functions are represented using solid, colored lines.  (a) An example of a function that interpolates the training data but does not memorize training data: training data are not recoverable from just the function alone.  (b)  An example of a function that interpolates and memorizes training data: training data are recoverable as the range of the function.  (c) An example of an interpolating function for which the training examples are attractors; the basis of attraction are shown.  (d) Iteration of the function from (c) leads to a function that is piece-wise constant almost everywhere, with the training examples corresponding to the non-trivial constant regions.  The fact that the training examples are attractors implies that iteration provides a  retrieval mechanism.}
\label{fig: Figure 1}
\end{figure*}

While it has been observed (e.g.,~\cite{RethinkingGeneralization}) that over-parameterized networks can interpolate the training data, there is no apriori reason why it should be possible to recover the training data from the network.  In this work, we show that, remarkably, the retrieval mechanism also follows naturally from training: the examples can be recovered simply by iterating the learned map. A depiction of the memorization and retrieval mechanisms is presented in Fig.~\ref{fig: Figure 1}c and \ref{fig: Figure 1}d. More precisely, given a set of training examples $\{x^{(i)}\}_{i=1}^{n} \subset \mathbb{R}^{d}$ and an overparameterized neural network implementing a family of continuous functions $\mathcal{F} = \{f : \mathbb{R}^d \rightarrow \mathbb{R}^d\}$, we show that minimizing the following \textit{autoencoding} objective with gradient descent methods leads to training examples being stored as attractors:
\begin{equation}
\label{eq: autoencoder objective}
\arg\min_{f \in \mathcal{F}} \;\sum\limits_{i=1}^n \|f(x^{(i)}) - x^{(i)}\|^2
\end{equation}
Interestingly, attractors arise without any specific regularization to the above loss function.  We demonstrate this phenomenon by presenting a wealth of empirical evidence, including a network that stores $500$ images from ImageNet-64 \cite{PixelRNN} as attractors.  In addition, we present a proof of this phenomenon for over-parameterized networks trained on single examples.  

Furthermore, we show that a slight modification of the objective \eqref{eq: autoencoder objective} leads to an implementation of associative memory for sequences.  More precisely,  given a sequence of training examples $\{x^{(i)}\}_{i=1}^{n}\subset \mathbb{R}^d$, minimizing the following \textit{sequence encoding} objective with gradient descent methods leads to the training sequence being stored as a stable~limit~cycle:
\begin{equation}
\label{eq: sequence encoder objective}
\arg\min_{f \in \mathcal{F}} \;\sum\limits_{i=1}^n  \|f(x^{((i\hspace{-2mm}\mod n) + 1))}) - x^{(i)}\|^2.
\end{equation}

Multiple cycles can be encoded similarly (Appendix \ref{appendix: A}).  In particular, we provide several examples of networks storing video and audio samples as limit cycles.  Interestingly, these experiments suggest that sequence encoding provides a more efficient mechanism for memorization and retrieval of training examples than autoencoding.  By considering a sequence encoder as a composition of maps, we indeed prove that sequence encoders are more contractive to a sequence of examples than autoencoders~are~to~individual~examples. 

\section{Related Work}

Autoencoders \cite{AutoencodersBaldi} are commonly used for manifold learning, and the autoencoder architecture and objective (Eq.~\eqref{eq: autoencoder objective}) have been modified in several ways to improve their ability to represent data manifolds.  Two variations, contractive and denoising autoencoders, add specific regularizers to the objective function in order to make the functions implemented by the autoencoder contractive towards the training data \cite{DenoisingAutoencoders, ContractiveAutoencoders,  BengioContractiveAutoencoders}.  However, these autoencoders are typically used in the under-parameterized regime, where they do not have the capacity to interpolate (fit exactly) the training examples and hence cannot store the training examples as fixed points.   

On the other hand, it is well-known that over-parameterized neural networks can interpolate the training data when trained with gradient descent methods \cite{RethinkingGeneralization, DuOptimizesOverparameterizedNetworks, GlobalMinimaDeepNetworks, DuAdaptiveMethods}. As a consequence,  over-parameterized autoencoders can store training examples as fixed points. In particular, recent work empirically studied over-parameterized autoencoders in the setting with one training example \cite{IdentityCrisis}. 

In this paper, we take a dynamical systems perspective to study over-parameterized autoencoders and sequence encoders. In particular, we show that not only do over-parameterized autoencoders (sequence encoders) trained using standard methods store training examples (sequences) as fixed points (limit cycles), but that these fixed points (limit cycles) are \emph{attractors} (\emph{stable}), i.e., they can be recovered via iteration. While energy-based methods have also been shown to be able to  recall sequences as stable limit cycles \cite{KoskoLimitCycles, BuhmannLimitCycles}, the mechanism identified here is unrelated and novel: it does not require setting up an energy function and is a direct consequence of training an over-parameterized network. 

\section*{Background from Dynamical Systems}

We now introduce tools related to dynamical systems that we will use to analyze autoencoders and sequence encoders.

\vspace{2mm}

\noindent \textbf{Attractors in Dynamical Systems.} Let $f: \mathbb{R}^d \rightarrow \mathbb{R}^d$ denote the function learned by an autoencoder trained on a dataset $X = \{x^{(i)} \}_{i=1}^n\subset \mathbb{R}^{d}$. Consider the sequence $\{f^{k}(x)\}_{k \in \mathbb{N}}$ where $f^{k}(x)$ denotes $k$ compositions of $f$ applied to $x \in \mathbb{R}^d$. A point $x\in\mathbb{R}^d$ is a \textit{fixed point} of $f$ if  $f(x) = x$; in this case the sequence $\{f^{k}(x)\}_{k \in \mathbb{N}}$ trivially converges to $x$. 

Since overparameterized autoencoders interpolate the training data, it holds that $f(x^{(i)}) = x^{(i)}$ for each training example $x^{(i)}\in X$; hence all training examples are fixed points of~$f$.\footnote{To ensure $f(x^{(i)}) \approx x^{(i)}$, it is essential to train until the loss is very small; we used $\leq 10^{-8}$.}  We now formally define what it means for a fixed point to be an \emph{attractor} and provide a sufficient condition for this property. 

\begin{definition}
A fixed point $x^* \in \mathbb{R}^d$ is an \textbf{attractor} of $f:\mathbb{R}^d\to \mathbb{R}^d$ if there exists an open neighborhood, $\mathcal{O}$, of $x^*$, such that for any $x \in \mathcal{O}$, the sequence $\{f^k (x)\}_{k \in \mathbb{N}}$ converges to $x^*$ as $k \rightarrow \infty$.  The set $S$ of all such points is called the \textbf{basin of attraction} of $x^*$.  
\end{definition}

\begin{prop}
\label{prop: Proposition 1}
A fixed point $x^* \in \mathbb{R}^d$ is an attractor of a differentiable map $f$ if all eigenvalues of the  Jacobian of $f$ at $x^*$ are strictly less than $1$ in absolute value. If any of the eigenvalues are greater than $1$, $x^*$ cannot be an attractor.  
\end{prop}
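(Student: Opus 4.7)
The plan is to study the local dynamics of $f$ near $x^*$ through its linearization $J := Df(x^*)$. Differentiability combined with $f(x^*) = x^*$ yields the first-order expansion
\[
f(x) - x^* = J(x - x^*) + r(x), \qquad \|r(x)\| = o(\|x - x^*\|),
\]
so the behaviour of iterates near $x^*$ is controlled by $J$ up to a higher-order error. Both directions of the proposition then follow by combining this expansion with spectral information about $J$.

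\textbf{Sufficiency.} Assume every eigenvalue $\lambda$ of $J$ satisfies $|\lambda| < 1$, so the spectral radius $\rho(J) < 1$. I would invoke the standard adapted-norm construction to produce a norm $\|\cdot\|_*$ on $\mathbb{R}^d$ together with a constant $c \in (\rho(J), 1)$ for which $\|Jv\|_* \le c \|v\|_*$ for all $v$. Pick $\varepsilon > 0$ with $c + \varepsilon < 1$ and choose $\delta > 0$ small enough that $\|r(x)\|_* \le \varepsilon \|x - x^*\|_*$ whenever $\|x - x^*\|_* < \delta$. Then
\[
\|f(x) - x^*\|_* \le (c + \varepsilon)\, \|x - x^*\|_*
\]
on this ball, so $f$ maps it into itself and $\|f^k(x) - x^*\|_* \le (c + \varepsilon)^k \|x - x^*\|_* \to 0$, showing that $x^*$ is an attractor.

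\textbf{Necessity.} Suppose $J$ has an eigenvalue $\lambda$ with $|\lambda| > 1$, and let $V \subseteq \mathbb{R}^d$ be the real $J$-invariant subspace associated with $\lambda$ (one-dimensional for real $\lambda$, two-dimensional for a complex conjugate pair). A suitable choice of basis makes $J|_V$ an expansion, i.e.\ $\|Jv\| \ge |\lambda|\, \|v\|$ for $v \in V$. Arguing by contradiction, suppose $x^*$ is an attractor with basin $\mathcal{O}$, and pick $v \in V$ so small that $x^* + v \in \mathcal{O}$. Combining the lower bound above with the higher-order estimate on $r$ from the expansion, an induction shows that the $V$-component of $f^k(x^* + v) - x^*$ grows by a factor of at least $|\lambda| - \eta > 1$ per step while the iterate remains in the linear regime, so it must eventually exit any bounded neighborhood of $x^*$. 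This contradicts $f^k(x^* + v) \to x^*$.

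\textbf{Main obstacle.} The most delicate step is the adapted-norm construction in the sufficiency proof: for a non-diagonalizable $J$ the Euclidean operator norm can strictly exceed $\rho(J)$, so $\|J\|$ cannot be controlled by $\rho(J)$ directly in the standard norm. The classical remedy is to conjugate $J$ toward its Jordan normal form after rescaling each block's off-diagonal entries by a small factor $\varepsilon$, producing a norm in which $\|J\|_* \le \rho(J) + \varepsilon$. A minor additional subtlety for the necessity direction is isolating the unstable direction when $\lambda$ is complex, which is handled by working on the real two-dimensional invariant subspace carrying the conjugate pair, on which $J$ acts as scaling by $|\lambda|$ composed with a rotation.
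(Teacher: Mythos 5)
The paper does not actually prove this proposition; it is quoted as a standard fact with a citation to a dynamical-systems textbook, so there is no in-paper argument to compare against. Your sufficiency direction is the standard proof and is correct: the adapted (Jordan-block-rescaled) norm gives $\|Jv\|_* \le c\|v\|_*$ with $c<1$, differentiability gives $\|r(x)\|_* \le \varepsilon\|x-x^*\|_*$ on a small ball, and the resulting contraction estimate closes the argument.

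The necessity direction, however, has a genuine gap, in two places. First, the induction on the $V$-component does not close as stated: the remainder bound is $\|r(z)\| \le \eta\|z\|$ in terms of the \emph{full} norm of $z = f^k(x)-x^*$, not of its $V$-component, so the recursion $\|Pz_{k+1}\| \ge |\lambda|\,\|Pz_k\| - \eta\|z_k\|$ only yields growth if you also control $\|z_k\|/\|Pz_k\|$; if some other eigenvalue has larger modulus than your chosen $\lambda$, the complementary component can outgrow the $V$-component and the error term swamps the expansion. The standard fix is to take the full unstable subspace (or an eigenvalue of maximal modulus) and run an invariant-cone argument. Second, and more importantly, your contradiction does not follow: the expansion estimate is valid only while the iterate stays in the small ball where the linearization controls $f$, so what you have shown is that the orbit of $x^*+v$ \emph{exits that ball at some finite time} --- not that it exits every bounded neighborhood. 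Exiting a fixed ball once is entirely consistent with $f^k(x^*+v)\to x^*$; the paper's definition of attractor requires only that orbits from a neighborhood converge, not that they stay close (Lyapunov stability), so your argument establishes instability rather than non-attraction. To finish, you must exploit the fact that a convergent orbit eventually re-enters and \emph{remains} in the small ball forever: show that the set of points whose entire forward orbit stays in the ball is disjoint from the punctured unstable cone, and then derive a contradiction with the assumption that an entire open neighborhood is attracted (in one dimension this can be done with the intermediate value theorem locating a spurious fixed point in the forward image of the neighborhood; in general it requires the unstable-manifold/cone machinery). As written, the second half of the proposition is not proved.
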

\vspace{-1mm}

Proposition~\ref{prop: Proposition 1} is a well-known condition in the theory of dynamical systems (Chapter 6 of \cite{DynamicalSystem}).  The condition intuitively means that the function $f$ is ``flatter'' around an attractor $x^*$.  Since training examples are fixed points in over-parameterized autoencoders, from Proposition \ref{prop: Proposition 1}, it follows that a training example is an attractor if the maximum eigenvalue (in absolute value) of the Jacobian at the example is less than $1$.  Since attractors are recoverable through iteration, autoencoders that store training examples as attractors guarantee recoverability of these examples.  Energy-based methods also allow for verification of whether a training example is an attractor.  However, this requires checking the second order condition that the Hessian is positive definite at the training example, which is more computationally expensive than checking the first order condition from Proposition~\ref{prop: Proposition 1}.

\vspace{2mm}

\noindent \textbf{Discrete Limit Cycles in Dynamical Systems.}  Discrete limit cycles can be considered the equivalent of an attractor for sequence encoding, and a formal definition is provided below.

\begin{definition}
A finite set $X^* = \{x^{(i)}\}_{i=1}^{n} \subset \mathbb{R}^d$ is a \textbf{stable discrete limit cycle} of a smooth function $f : \mathbb{R}^d \rightarrow \mathbb{R}^d$ if: \textbf{(1)} $f(x^{(i)}) = x^{(i\mod n) + 1} ~ \forall i \in \{1, \ldots n\}$ ; \textbf{(2)} There exists an open neighborhood, $\mathcal{O}$, of $X^*$ such that for any $x \in \mathcal{O}$, $X^*$ is the limit set of $\{f^{k}(x)\}_{k=1}^{\infty}$.   
\end{definition}

The equivalent of Proposition \ref{prop: Proposition 1} for verifying that a finite sequence of points forms a limit cycle is provided below. 

\begin{prop}
\label{prop: Proposition 2}
Let network $f: \mathbb{R}^d \rightarrow \mathbb{R}^d$ be trained on a given sequence $x^{(1)}, \dots , x^{(n)}$ such that $f(x^{(i)})=x^{((i \mod n) + 1)}$. Then the sequence $\{x^{(i)} \}_{i=1}^{n}$ forms a stable discrete limit cycle if the largest eigenvalue of the Jacobian of $f^n(x^{(i)})$ is (in absolute value) less than 1 for any $i$.  
\end{prop}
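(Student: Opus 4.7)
The plan is to reduce the statement to Proposition~\ref{prop: Proposition 1} applied to the composed map $g := f^n$. First, since $f$ cyclically permutes the points $x^{(1)},\dots,x^{(n)}$ by assumption, each $x^{(i)}$ is a fixed point of $g$. The hypothesis says the Jacobian of $g$ at $x^{(i)}$ has spectral radius strictly less than one for some $i$. A preliminary observation is that this automatically holds at every $x^{(j)}$: by the chain rule, $Dg(x^{(i)})$ is a cyclic product of the factors $Df(x^{(1)}),\dots,Df(x^{(n)})$, and the nonzero eigenvalues of such cyclic products are the same (since $AB$ and $BA$ share nonzero spectrum). Thus Proposition~\ref{prop: Proposition 1} applies at every point of the cycle, yielding open neighborhoods $\mathcal{O}_i\ni x^{(i)}$ on which iterates of $g$ converge to $x^{(i)}$.

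Next, I would promote this into a statement about iterates of $f$. Set $\mathcal{O} := \bigcup_{i=1}^{n}\mathcal{O}_i$; this is an open neighborhood of the finite set $X^*$. Fix $x\in\mathcal{O}$, say $x\in\mathcal{O}_i$. For any integer $k\ge 0$, write $k = mn+r$ with $0\le r<n$, so that
\begin{equation*}
f^{k}(x) \;=\; f^{r}\bigl(g^{m}(x)\bigr).
\end{equation*}
Since $g^{m}(x)\to x^{(i)}$ and $f^{r}$ is continuous, the subsequence indexed by $k\equiv r\pmod n$ converges to $f^{r}(x^{(i)}) = x^{((i+r-1)\bmod n)+1}\in X^{*}$. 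Taking the union over $r=0,\dots,n-1$ shows that the set of subsequential limits of $\{f^{k}(x)\}_{k}$ is exactly $X^{*}$, which is the definition of a stable discrete limit cycle.

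\paragraph{Anticipated obstacle.}
Nothing in the argument is deep, but two small points require care. The first is the cyclic invariance of the spectrum of $Dg$ across the orbit; without it, the hypothesis ``for any $i$'' would only give convergence near one point, not a neighborhood of the whole cycle. The second is the bookkeeping in identifying the limit set of $\{f^k(x)\}$ with all of $X^{*}$ rather than a single point: one must separate iterates according to their residue mod $n$, handle each residue class via continuity of $f^r$, and then verify that the resulting collection of subsequential limits is precisely $X^{*}$. Assuming the $x^{(i)}$ are distinct (which is implicit in calling them a cycle of length $n$), this is straightforward, and the main statement follows.
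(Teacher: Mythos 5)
Your proposal is correct and follows the same route as the paper, which simply applies Proposition~\ref{prop: Proposition 1} to the map $f^n$ using $f^n(x^{(i)})=x^{(i)}$. The paper's proof is a single sentence, so your additional details --- the cyclic invariance of the nonzero spectrum of $Dg$ along the orbit and the residue-class bookkeeping identifying the limit set with all of $X^*$ --- are welcome elaborations of steps the paper leaves implicit, not a different argument.
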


This follows directly by applying Proposition \ref{prop: Proposition 1} to the map $f^n$, since $x^{(i)} = f^n(x^{(i)})$ and $f(x^{(i)})=x^{((i \mod n) + 1)}$.  Before presenting our results, we provide the following important remark. 
\vspace{2mm}

\noindent \textbf{Why the Emergence of Attractors in Autoencoders is Notable.} Proposition \ref{prop: Proposition 1} states that for a fixed point to be an attractor, all eigenvalues of the Jacobian at that point must be less than $1$ in absolute value. Since the number of eigenvalues of the Jacobian equals the dimension of the space, this means that the angle of the derivative is less than $\pi/4$ in \emph{every} eigendirection of the Jacobian. This is a highly restrictive condition, since intuitively, we expect the ``probability'' of such an event to be $1/2^d$. Hence, a fixed point of an arbitrary high-dimensional map is unlikely to be an attractor. Indeed, as we show in Corollary~1, fixed points of neural networks are not generally  attractors.
While not yet fully understood, the emergence and, indeed, proliferation of attractors in autoencoding is not due solely to architectures but to specific inductive biases of the training procedures.

\section{Empirical Findings}

\begin{figure*}[ht]
\centering
\includegraphics[height=3.5in]{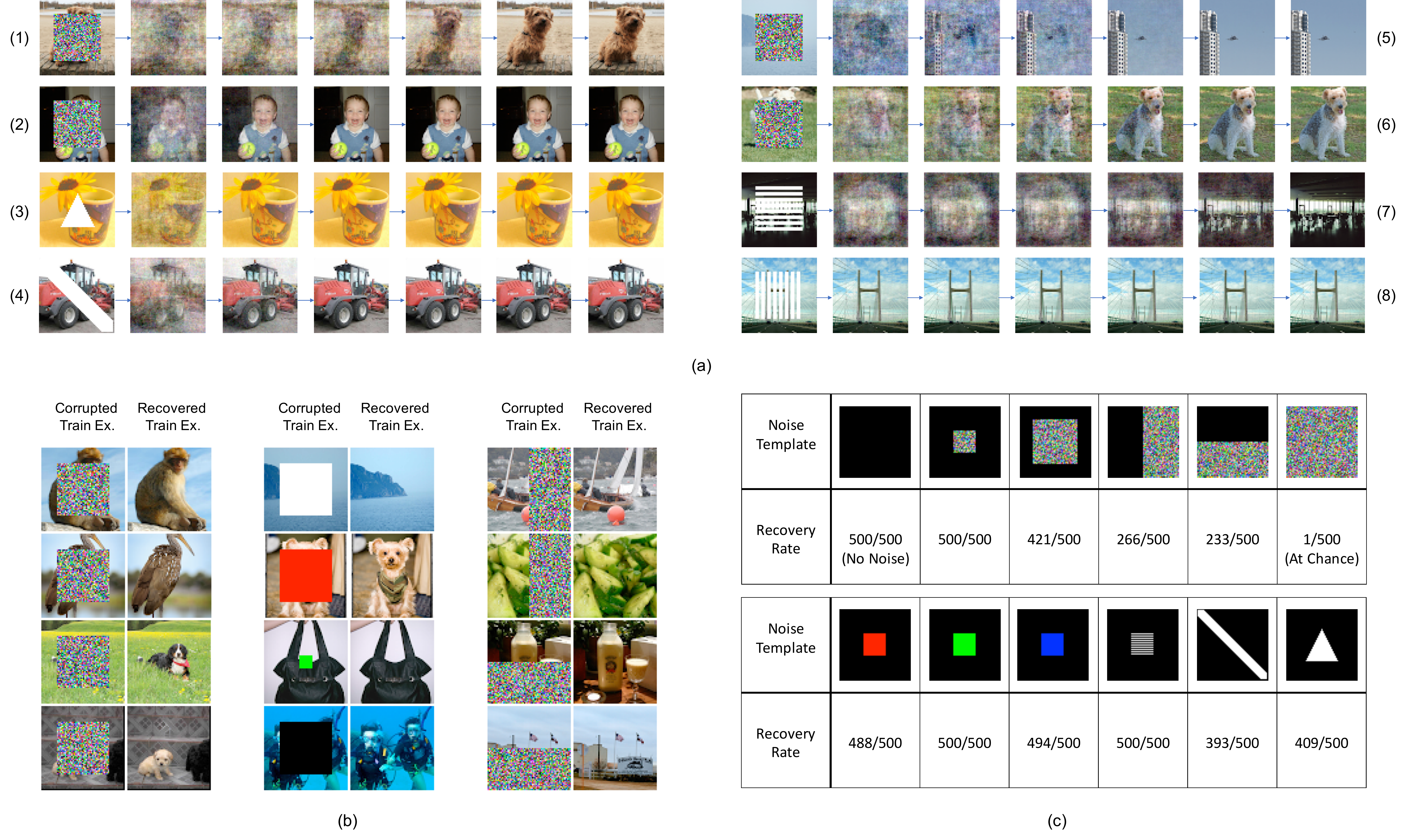}
\vspace{-0.3cm}
\caption{Example of an over-parameterized autoencoder storing $500$ images from ImageNet-64 as attractors after training to a reconstruction error of less than $10^{-8}$.  Architecture and optimizer details are provided in Appendix, Fig. \ref{appendix: fig 6}.  (a) By iterating the trained autoencoder on corrupted versions of training samples, individual training samples are recovered.  (b) Samples that are corrupted by uniform random noise or squares of varying color and size are recovered via iteration. (c) Fraction of samples recovered correctly from different noise applied to the training images. A sample is considered recovered when the error between the original sample and the recovered sample is less than $10^{-7}$.} 
\label{fig: Figure 2}
\end{figure*}

\noindent \textbf{Training Examples are Stored as Attractors in Over-parameterized Autoencoders.} 
In the following, we present a range of empirical evidence that attractors arise in autoencoders across common architectures and optimization methods. For details on the specific architecures and optimization schemes used for each experiment, see Appendix, Fig~\ref{appendix: fig 6}.

\vspace{2mm}
\noindent \textbf{Storing Images as Attractors.} In Fig.~\ref{fig: Figure 2}, we present an example of an over-parameterized autoencoder storing $500$ images from \mbox{ImageNet-64 \cite{PixelRNN}} as attractors.  This was achieved by training an autoencoder with depth $10$, width $1024$, and cosid nonlinearity \cite{Cosid} on 500 training examples using the Adam \cite{Adam} optimizer to loss $\leq 10^{-8}$.  We verified that all $500$ training images were stored as attractors by checking that the magnitudes of all eigenvalues of the Jacobian matrix at each example were less than $1$. Indeed, Fig.~\ref{fig: Figure 2}a demonstrates that iteration of the trained autoencoder map starting from corrupted inputs converges to individual training examples. 
A common practice for measuring recoverability of training patterns is to input corrupted versions of the patterns and verify that the system is able to recover the original patterns.  From Proposition \ref{prop: Proposition 1}, provided that a corrupted example is in the basin of attraction of the original example, iteration is guaranteed to converge to the original example.  In examples (5) and (6) Fig.~\ref{fig: Figure 2}a, the corrupted images are not in the basin of attraction for the original examples, and so iteration converges to a different (but contextually similar) training example. Fig.~\ref{fig: Figure 2}b provides further examples of correct recovery from corrupted images.  Fig.~\ref{fig: Figure 2}c presents a quantitative analysis of the recovery rate of training examples under various forms of corruption.  Overall, the recovery rate is remarkably high: even when $50\%$ of the image is corrupted, the recovery rate of the network is significantly higher than expected by chance.

Examples of autoencoders storing training examples as attractors when trained on $2000$ images from MNIST \cite{mnist-lecun1998} and $1000$ black-and-white images from CIFAR10 \cite{CIFAR10} are presented in the Appendix Fig.~\ref{appendix: fig 7}, Fig.~\ref{appendix: fig 8}, respectively.  The MNIST autoencoder presented in Appendix, Fig.~\ref{appendix: fig 7} stores $2000$ training examples as attractors.  Note that one iteration of the learned map on test examples can look similar to the identity function, but in fact, iterating until convergence yields a training example (see Appendix Fig.~\ref{appendix: fig 7}).
\vspace{2mm}

\noindent \textbf{Spurious Attractors.} While in these examples, we verified that the training examples were stored as attractors by checking the eigenvalue condition, there could be \textit{spurious attractors}, i.e. attractors other than the training examples.  In fact, spurious attractors are known to exist for Hopfield networks \cite{SpuriousAttractorsHopfield}.  To investigate whether there are additional attractors outside of the training examples, we iterated the map from sampled test images and randomly generated images until convergence. More precisely, we declared convergence of the map at iteration $k$ for some image $x$ when $\| f^{k+1}(x) - f^{k}(x) \|_2 < 10^{-8}$ and concluded that $f^{k}(x)$ had converged to the training example $x^{(i)}$ if $\|f^{k}(x) - x^{(i)} \|_2 < 10^{-7}$.

In general, spurious attractors can exist for over-parameterized autoencoders, and we provide examples in the Appendix Fig.~\ref{appendix: fig 9}.  However, remarkably, for the network presented in Fig.~\ref{fig: Figure 2}, we could not identify any spurious attractors even after iterating the trained map from $40,000$ test examples from ImageNet-64, $10,000$ examples of uniform random noise, and $10,000$ examples of Gaussian noise with variance $4$.

\vspace{2mm}

\noindent \textbf{Attractors Arise across Architectures, Training Methods \& Initialization Schemes.}  We performed a thorough analysis of the attractor phenomenon identified above across a number of common architectures, optimization methods, and initialization schemes. Starting with fully connected autoencoders, we analyzed the number of training examples stored as attractors when trained on $100$ black and white images from CIFAR10 \cite{CIFAR10} under the following nonlinearities, initializations, and optimization methods:
\begin{itemize}
    \item \textit{Nonlinearities}: ReLU, Leaky Relu, SELU, cosid ($\cos x - x$), Swish \cite{Cosid, Swish, LeakyReLU, Selu}, and sinusoidal ($x + (\sin10x)/5)$.
    \item \textit{Optimization Methods}: Gradient Descent (GD), GD with momentum, GD with momentum and weight decay, RMSprop, and Adam (Ch.~8 of \cite{goodfellow2016deep}).
    \item \textit{Initialization Schemes}: Random uniform initialization, namely $U[-a, a]$, per weight for $a\in\{0.01, 0.02, $ \\ $0.05, 0.1, 0.15\}$.  These initialization schemes subsume the PyTorch (Version 0.4) default, Xavier initialization, and Kaiming initialization~\cite{PyTorch, KaimingInit, Xavier}.
\end{itemize}

\begin{figure*}[!t]
\centering
 \includegraphics[height=1.37in]{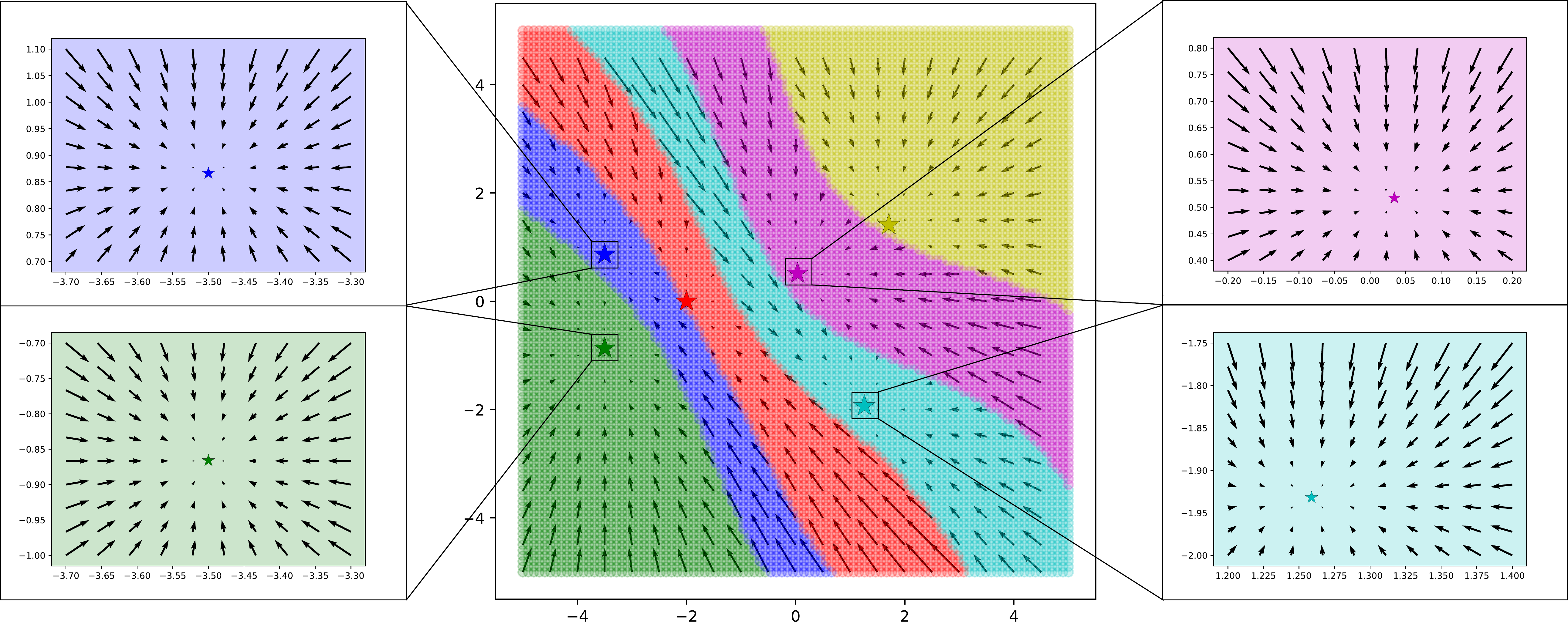}
\caption{Example of an over-parameterized autoencoder in the 2-dimensional setting storing training examples (represented as stars) as attractors.  Basins of attraction for each sample are colored by sampling $10,000$ points in a grid around the training examples, taking the limit of the iteration for each point, and assigning a color to the point based on the training example indicated by the limit.  The vector field indicates the direction of motion given by iteration, and the inserts indicate that iteration leads to training examples for all points in an open set around each example.  
\label{fig: Figure 3}
}
\end{figure*}

In Appendix, Fig.~\ref{appendix: fig 15} and \ref{appendix: fig 16}, we provide the number of training examples stored as attractors for all possible combinations of (a) nonlinearity and optimization method listed above; and (b) nonlinearity and initialization scheme listed above.  These tables demonstrate that attractors arise in all settings for which training converged to a sufficiently low loss within $1,000,000$ epochs.  In Appendix Figures \ref{appendix: fig 15}, \ref{appendix: fig 16} and Appendix \ref{appendix: F}, we also present examples of convolutional and recurrent networks that store training examples as attractors, thereby demonstrating that this phenomenon is not limited to fully connected networks and occurs in all commonly used network architectures.  
\vspace{2mm}

\noindent \textbf{Visualizing Attractors in 2D.} In order to better understand the attractor phenomenon, we present an example of an over-parameterized autoencoder storing training examples as attractors in the 2D setting, where the basins of attraction can easily be visualized (Fig.~\ref{fig: Figure 3}).  We trained an autoencoder to store $6$ training examples as attractors. Their basins of attraction were visualized by iterating the trained autoencoder map starting from $10,000$ points on a grid until convergence. The vector field indicates the direction of motion given by iteration. Also in this experiment, we found no spurious attractors.  Each training example and corresponding basin of attraction is colored differently. Interestingly, the example in Fig.~\ref{fig: Figure 3} shows that the metric learned by the autoencoder to separate the basins of attraction is not Euclidean distance, which would be indicated by a Voronoi diagram. 

\vspace{2mm}

\begin{figure*}[!t]
    \centering
     \includegraphics[height=2.4in]{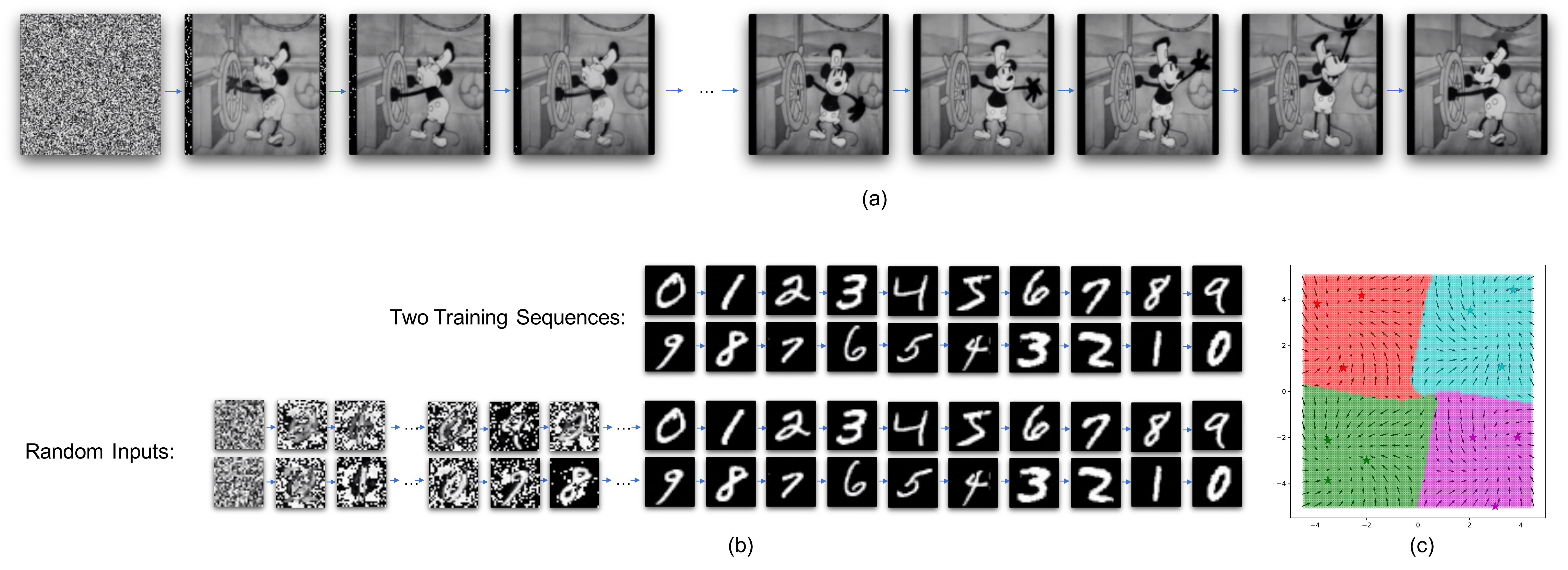}
     \vspace{-0.3cm}
    \caption{Examples of over-parameterized sequence encoders storing training sequences as limit cycles.  Architecture and optimizer details are provided in Appendix, Fig.~\ref{appendix: fig 6}.  (a) When trained on $389$ frames of size $128 \times 128$ from the Disney film ``Steamboat Willie'', the entire movie was stored as a limit cycle.  Hence, iteration from random noise leads to recovery of the entire sequence.  (b) When trained on two sequences of length $10$ from MNIST, each sequence was stored as a limit cycle.  Hence iteration from random noise leads to the recovery of each individual sequence.  (c) Visualization of the basins of attraction for a sequence encoder storing 4 sequences as limit cycles in the 2-dimensional setting.  The vector field indicates the direction of motion given by iteration.}
    \label{fig: Figure 4}
\end{figure*}

\noindent \textbf{Over-parameterized Sequence Encoders Store Training Examples as Stable Limit Cycles and are More Efficient at Memorizing and Retrieving Examples than Autoencoders.}  We have thus far analyzed the occurrence of attractors in over-parameterized autoencoders.  In this section, we demonstrate via various examples that by modifying the autoencoder objective to encode sequences (Eg.~[\ref{eq: sequence encoder objective}]), we can implement a form of associative memory for sequences. For details on the specific architectures and optimization schemes used for each experiment, see Appendix Figure~\ref{appendix: fig 6}. 
\vspace{2mm}

\noindent \textbf{Storing Sequences as Limit Cycles.}
We trained a network to encode 389 frames of size $128 \times 128$ from the Disney film ``Steamboat Willie'' by mapping frame $i$ to frame $i+1 \mod 389$.  Fig.~\ref{fig: Figure 4}(a) and the attached video\footnote{Located at: \url{https://github.com/uhlerlab/neural_networks_associative_memory}} show that iterating the trained network starting from random noise yields the original video. 

As a second example, we encoded two 10-digit sequences from MNIST: one counting upwards from digit 0 to 9 and the other counting down from digit 9 to 0. The maximal eigenvalues of the Jacobian of the trained encoder composed $10$ times is 0.0034 and 0.0033 for the images from the first and second sequence, respectively.  Hence by Proposition \ref{prop: Proposition 2}, both sequences form limit cycles. Indeed, as shown in Fig.~\ref{fig: Figure 4}(b), iteration from Gaussian noise leads to the recovery of both training sequences.  

Finally, in Fig.~\ref{fig: Figure 4}(c), we visualized the vector field and basins of attraction for four cycles in the 2-dimensional setting.  Unlike autoencoding where points near a training example are pushed towards it via iteration, the points now move following the cycles.  In Appendix \ref{appendix: G}, we also trained a sequence encoder that stores $10$ seconds of speech as a limit cycle.  The attached audio file demonstrates that iterating the trained network from random noise recovers the original audio. 

\begin{figure*}[!t]
\centering
    \includegraphics[height=1.7in]{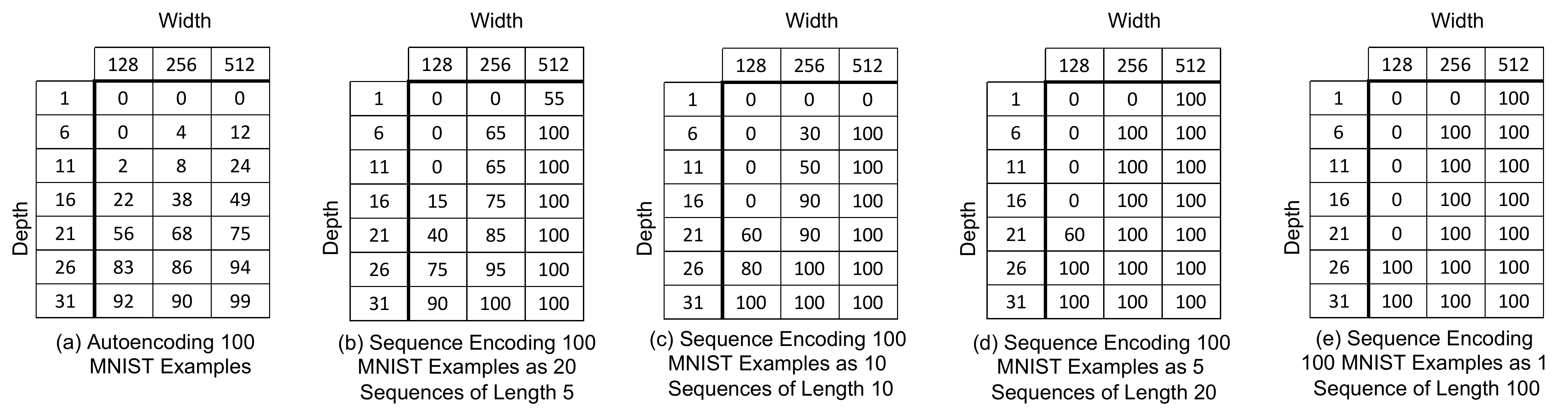}
    \vspace{-0.3cm}
\caption{Sequence encoders are more efficient at implementing associative memory than autoencoders.  Number of training examples recovered are out of 100; architecture and optimizer details are provided in the Appendix, Fig.~\ref{appendix: fig 6}. (a) Number of recovered images when autoencoding 100 examples from MNIST individually; a network of depth 31 and width 512 recovers $99$ images out of 100. (b)-(e) Sequence encoding the same 100 MNIST examples as sequences of different lengths improves the recovery rates; in particular, a network of depth 1 and width 512 recovers the full 10 images when encoded as 5 sequences of length 20 (d) or 1 sequence of length 100 (e).}
\label{fig: Figure 5}
\end{figure*} 

\vspace{2mm}

\noindent \textbf{Efficiency of Sequence Encoding.} In Fig.~\ref{fig: Figure 5}, we analyze the network sizes (width and depth) needed to store and retrieve $100$ training images from MNIST using fully connected autoencoders and sequence encoders.
Interestingly, our experiments indicate that memorization and retrieval of training examples can be performed more efficiently through sequence encoding than autoencoding. In particular, Fig.~\ref{fig: Figure 5}(a) shows the number of training examples (out of 100) that are attractors for different width and depth of the network. Note that a depth of $31$ and width of $512$ is needed to store almost all (99) training examples. If we instead encode the same data using $20$ sequences of length $5$, all $20$ sequences (and thus all $100$ examples) can be recovered using a much smaller network with a depth of $6$ and $512$ hidden units per layer (Fig.~\ref{fig: Figure 5}(b)). Extending this idea further (Fig.~\ref{fig: Figure 5}(c)-(e)), if we chain all $100$ examples as a single sequence, the entire sequence is stored using a network with only $1$ hidden layer and $512$ hidden units.  
\vspace{2mm}

\noindent \textbf{Increasing Depth and Width leads to more Attractors/Limit Cycles.}
The experiments in Fig.~\ref{fig: Figure 5} indicate that increasing network depth and width leads to an increase in the number of training examples / sequences stored as attractors / limit cycles. For over-parameterized autoencoders, this implies that the maximum eigenvalue of the Jacobian is less than $1$ for a greater number of training examples upon increasing network depth and width (see Proposition \ref{prop: Proposition 1}),  i.e., the network becomes more \textit{contractive} around the training examples. Indeed, by analyzing the histogram of the maximum eigenvalue of the Jacobian at each of the training examples, we observed that as network depth and width increases, the mode of these histograms shifts closer to zero (Appendix Fig.~\ref{appendix: fig 12}). Additionally, when considering the distribution of the top $1\%$ of Jacobian eigenvalues, we find that as network width increases, the variance of the distribution of Jacobian eigenvalues decreases, and when depth increases, the mode of the distribution shifts closer to zero (Appendix Fig.~\ref{appendix: fig 13}). In the following, we prove this phenomenon for a single training example, i.e., we prove that autoencoders trained on a single example become more contractive at the training example with increasing depth and width.

\section{Theoretical Analysis of Special Cases}

We now provide theoretical support for our empirical findings. Complete proofs are given in Appendix \ref{appendix: B}-\ref{appendix: E}.
\vspace{2mm}

\noindent \textbf{Proof that when trained on a single example, over-parameterized autoencoders store the example as an attractor.} We outline the proof for the 1-hidden layer setting. The complete proof for the multi-layer setting is given in the Appendix \ref{appendix: C}. 

Let $f(z) = W_1 \phi(W_2 z)$ represent a 1-hidden layer autoencoder with elementwise nonlinearity $\phi$ and weights  $W_1 \in \mathbb{R}^{k_0 \times k}$ and $W_2 \in \mathbb{R}^{k \times k_0}$, applied to $z \in \mathbb{R}^{k_0}$.  We analyze the function learned by gradient descent with learning rate $\gamma$ by minimizing the following autoencoding loss on 1 training example $x$:
\begin{align}
\label{eq_1_train}
    \mathcal{L}(x, f) = \frac{1}{2} \| x - f(x) \|_2^2.
\end{align}
Let $W_1^{(t)}, W_2^{(t)}$ denote the values of the weights after $t$ steps of gradient descent. To prove that $x$ is an attractor of $f$ after training, we solve for $W_1^{(\infty)}, W_2^{(\infty)}$ and compute the top eigenvalue of the Jacobian of $f$ at $x$ (denoted $\lambda_1(\mathbf{J}(f(x)))$).  

In order to solve for $W_1, W_2$, we first identify two invariants of gradient descent (proved in Appendix \ref{appendix: B}):  
\vspace{1mm}

\hspace{-4mm}\textbf{Invariant 1: } If $W_1$ and $W_2$ are initialized to be rank $1$ matrices of the form $x {u^{(0)}}^T$ and $v^{(0)} x^T$ respectively, 
then $W_1^{(t)} = x{u^{(t)}}^T$  and $W_2^{(t)} = v^{(t)} x^T$ for all time-steps $t>0$. 
\vspace{1mm}

\hspace{-4mm}\textbf{Invariant 2:} If, in addition, all weights in each row of $W_1$ and $W_2$ are initialized to be equal,  they remain equal throughout training.
\vspace{.5mm}

Invariant 1 implies that autoencoders trained on 1 example produce outputs that are multiples of the training example.  Generalizing this result, in Appendix \ref{appendix: D}, we prove that autoencoders trained on multiple examples produce outputs in the span of the training data.  Invariant 2 reduces gradient descent dynamics to the 1-dimensional setting. Using the Invariants 1 and 2 combined with gradient flow (i.e. taking the limit as the learning rate $\gamma \rightarrow 0$), we can solve for $W_1^{(\infty)}$ and $W_2^{(\infty)}$.

\begin{theorem}
\label{thm: closed form 1 hidden layer}
Let $f(z) = W_1 \phi(W_2 z)$ denote a 1-hidden layer network with elementwise nonlinearity $\phi$ and weights $W_1 \in \mathbb{R}^{k_0 \times k}$ and $W_2 \in \mathbb{R}^{k \times k_0}$, applied to $z \in \mathbb{R}^{k_0}$. Let $x \in \mathbb{R}^{k_0}$ be a training example with $\|x\|_2 = 1$. Assuming $\frac{\phi(z)}{\phi'(z)} < \infty ~\forall z \in \mathbb{R}$, then under Invariants 1 and 2, gradient descent with learning rate $\gamma\to 0$ applied to minimize the autoencoding loss in Eq.~\eqref{eq_1_train} leads to a rank 1 solution $W_1^{(\infty)} = xu^T$ and $W_2^{(\infty)} = vx^T$ with $u, v \in \mathbb{R}^{k}$ satisfying:
\begin{align*}
  \frac{u_i^2 - {u_i^{(0)}}^2}{2} &= \int_{v_i^{(0)}}^{v_i} \frac{\phi(z)}{\phi'(z)} dz \quad \textrm{and} \quad
  u_i \phi(v_i) = \frac{1}{k},
\end{align*}
and $u_i = u_j, v_i = v_j$ for all $i, j \in [k]$, where $u^{(0)}$ and $v^{(0)}$ are such that $W_1^{(0)} = x {u^{(0)}}^T$ and  $W_2^{(0)} = v^{(0)} x^T$. 
\end{theorem}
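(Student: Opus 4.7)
The plan is to use Invariants 1 and 2 to collapse the gradient flow onto a two-variable ODE system and then separate variables to obtain the stated relations. Since Invariant 1 preserves the factorization $W_1^{(t)} = x\,u^{(t)T}$ and $W_2^{(t)} = v^{(t)} x^T$, and $\|x\|_2 = 1$, the network output reduces to
\begin{equation*}
f(x) = x\,u^T \phi(v x^T x) = x\,u^T \phi(v) = \Bigl(\sum_{i=1}^k u_i \phi(v_i)\Bigr) x,
\end{equation*}
so the residual takes the one-dimensional form $x - f(x) = s\,x$ where $s := 1 - \sum_i u_i \phi(v_i)$. I would first verify this explicit reduction.

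Next, I would compute the gradients of $\mathcal{L}$ at a point of the invariant manifold. A direct calculation using the chain rule yields
\begin{equation*}
\nabla_{W_1}\mathcal{L} = -s\, x\, \phi(v)^T, \qquad \nabla_{W_2}\mathcal{L} = -s\bigl(u \odot \phi'(v)\bigr) x^T,
\end{equation*}
both of which are rank-one updates that preserve the factorization (re-confirming Invariant 1). Reading off the coordinates, gradient flow becomes the scalar system
\begin{equation*}
\dot u_i = s\,\phi(v_i), \qquad \dot v_i = s\, u_i\, \phi'(v_i), \qquad i = 1, \dots, k.
\end{equation*}

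Dividing these two equations eliminates the time and the common factor $s$: along any trajectory on which $s \neq 0$, we have $\frac{du_i}{dv_i} = \frac{\phi(v_i)}{u_i\,\phi'(v_i)}$, which separates to $u_i\,du_i = \frac{\phi(v_i)}{\phi'(v_i)}\,dv_i$. The hypothesis $\phi(z)/\phi'(z) < \infty$ guarantees that the right-hand side is integrable along the trajectory. Integrating from the initial condition to the limit gives the first claimed identity. For the second identity, I would argue that the loss $\tfrac{1}{2}s^2$ is monotonically non-increasing under gradient flow and that the trajectory approaches the fixed-point set of the ODE; since $\dot u_i$ and $\dot v_i$ can vanish simultaneously only when $s = 0$ (given that $\phi$ and $\phi'$ are controlled), the limit satisfies $\sum_i u_i^{(\infty)} \phi(v_i^{(\infty)}) = 1$. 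Invariant 2 forces every $u_i$ and every $v_i$ to remain equal throughout training, so this sum collapses to $k\,u_i\,\phi(v_i) = 1$, yielding $u_i\,\phi(v_i) = 1/k$.

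The main obstacle I anticipate is the convergence step: showing that gradient flow actually reaches $s = 0$ (rather than stalling at a nonzero critical configuration where some $u_i$ or $\phi(v_i)$ degenerates, or diverging to infinity along a trajectory where $s$ never hits zero). Handling this cleanly requires using the bound $\phi(z)/\phi'(z) < \infty$ to control the growth of $u_i^2$ via the already-derived integral relation, and then ruling out runaway behavior of the $v_i$ by the monotonicity of $\tfrac12 s^2$. The remaining separation-of-variables computation is routine once the reduction to the scalar ODE is in place.
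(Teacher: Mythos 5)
Your proposal is correct and follows essentially the same route as the paper: the paper likewise uses Invariants 1 and 2 to collapse the updates to scalar recursions for $u_1, v_1$, passes to gradient flow as $\gamma \to 0$, divides the two equations to cancel the residual factor, and separates variables to obtain the integral identity, with $u_i\phi(v_i) = 1/k$ coming from the zero-training-error (interpolation) condition. The convergence-to-zero-loss step you flag as the main obstacle is not actually handled in the paper either — it is simply assumed that $W_1^{(\infty)}, W_2^{(\infty)}$ interpolate — so your treatment is, if anything, more candid about that gap.
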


Theorem \ref{thm: closed form 1 hidden layer} allows us to compute the top eigenvalue of the Jacobian at $x$, denoted by $\lambda_1(\mathbf{J}(f(x)))$. 

\begin{theorem}
\label{thm: Top Eigenvalue}
Under the setting of Theorem \ref{thm: closed form 1 hidden layer}, it holds that
\begin{align*}
\lambda_1(\textbf{J}(f(x))) = \frac{\phi'(v_i)v_i}{\phi(v_i)}. 
\end{align*} 
\end{theorem}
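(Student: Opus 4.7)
The plan is to exploit the rank-one structure of $W_1^{(\infty)}$ and $W_2^{(\infty)}$ given by Theorem~\ref{thm: closed form 1 hidden layer} in order to reduce the Jacobian $\mathbf{J}(f(x))$ to a rank-one matrix whose single nonzero eigenvalue can be read off directly. First I would substitute $W_1 = xu^T$ and $W_2 = vx^T$ into $f(z) = W_1\phi(W_2 z)$. Since $W_2 z = v\,(x^T z)$, the elementwise application of $\phi$ yields a vector whose $i$-th entry is $\phi(v_i\, x^T z)$, and multiplication by $xu^T$ collapses this to a scalar-valued function of $x^T z$ times $x$:
\begin{align*}
f(z) \;=\; x\, g(x^T z), \qquad g(s) \;:=\; \sum_{i=1}^k u_i\, \phi(v_i s).
\end{align*}

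Next I would differentiate via the chain rule to obtain $\mathbf{J}(f)(z) = g'(x^T z)\, x x^T$. Evaluating at $z = x$ and using $\|x\|_2 = 1$ gives $\mathbf{J}(f(x)) = g'(1)\, x x^T$. The matrix $x x^T$ is rank one with the single nonzero eigenvalue $\|x\|_2^2 = 1$ (eigenvector $x$) and the remaining $k_0 - 1$ eigenvalues equal to $0$, so the top eigenvalue of $\mathbf{J}(f(x))$ is precisely $g'(1)$.

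Finally, I would simplify $g'(1) = \sum_i u_i v_i\,\phi'(v_i)$ using both conclusions of Theorem~\ref{thm: closed form 1 hidden layer}. Invariant~2 implies that all $u_i$ are equal and all $v_i$ are equal, so $g'(1) = k\, u_i v_i \phi'(v_i)$ for any fixed $i$; the stationarity condition $u_i \phi(v_i) = 1/k$ then gives $k u_i = 1/\phi(v_i)$, and combining these yields
\begin{align*}
\lambda_1(\mathbf{J}(f(x))) \;=\; g'(1) \;=\; \frac{\phi'(v_i)\, v_i}{\phi(v_i)},
\end{align*}
as claimed.

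Since the argument is essentially direct chain-rule differentiation followed by algebraic substitution, there is no real technical obstacle. The only step that merits explicit care is verifying that the rank-one form $g'(1)\, x x^T$ has top eigenvalue $g'(1)$ rather than $0$; this follows immediately from the unit-norm hypothesis $\|x\|_2 = 1$, which ensures that $x$ is itself the eigenvector realizing the nonzero eigenvalue of $xx^T$.
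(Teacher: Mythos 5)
Your proof is correct and follows essentially the same route as the paper: the paper likewise observes that the rank-one weights make $\mathbf{J}(f(x)) = x\,u^T(\phi'(v)\odot v)\,x^T$ a rank-one matrix with $x$ as the eigenvector for the unique nonzero eigenvalue, then substitutes $u_i = u_j$, $v_i = v_j$ and $u_i\phi(v_i) = 1/k$ to obtain $\phi'(v_i)v_i/\phi(v_i)$. Your packaging of the network as $f(z) = x\,g(x^Tz)$ is only a notational variant of that computation.
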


Using Theorem \ref{thm: Top Eigenvalue}, we can explicitly determine whether a training example $x$ is an attractor, when given a nonlinearity $\phi$, initial values for $u^{(0)}$ and $v^{(0)}$, and the width of the network $k$.  We note that for all non-piecewise nonlinearities used thus far, we can make any training example an attractor by selecting values for $u^{(0)}$, $v^{(0)}$ and $k$ appropriately.  

\vspace{0.2cm}

\noindent\emph{Example.} Let $x$ be a training example in $\mathbb{R}^{k_0}$.  Suppose $\phi(z) = \frac{1}{1+e^{-z}}$ for $z \in \mathbb{R}$, $k=2$, and $u_i^{(0)} = v_i^{(0)} = 1$ for all $i$.  Then by Theorems \ref{thm: closed form 1 hidden layer} and \ref{thm: Top Eigenvalue}, it holds after training that
\begin{align*}
    &\frac{u_i^2 - 1}{2} = \int_{1}^{v_i} \left(\frac{1}{1 - \phi(z)}\right) dz \quad \textrm{and} \quad \frac{u_i}{1 + e^{-v_i}}  = \frac{1}{2}
\end{align*}
with $u_i \approx .697$, $v_i \approx .929$ and $\lambda_1(\mathbf{J}(f(x))) \approx .263$. Since $\lambda_1(\mathbf{J}(f(x))) < 1$, $x$ is an attractor. We also confirmed this result (up to third decimal place) experimentally by training a network using gradient descent with learning rate $10^{-4}$. 

\vspace{0.2cm}

Importantly,  the analysis of Theorem~\ref{thm: Top Eigenvalue} implies that attractors arise as a result of training and are not simply a consequence of interpolation by a neural network with a certain architecture; see the following corollary. 

\vspace{0.2cm}
\noindent\textbf{Corollary 1. }  \emph{Let $x \in \mathbb{R}^{k_0}$ with $\|x\|_2 = 1$ and  $f(z) = xu^T \phi(vx^T z)$, where $u, v \in \mathbb{R}^k$ and $\phi$ is a smooth element-wise nonlinearity with $\frac{\phi'(z)}{\phi(z)} < \infty$ for all $z \in \mathbb{R}$, $\left|\frac{\phi'(z) z}{\phi(z)}\right| > 1$ for $z$ in an open interval $\mathcal{O} \subset \mathbb{R}$.  Then there exist infinitely many $v \in \mathbb{R}^k$, such that $f(x) = x$ and $x$ is not an attractor for $f$.}

The condition, $\left|\phi'(z) z / \phi(z)\right| > 1$ for $z$ in an open interval, holds for all smooth non-linearities considered in this paper. The proof is presented in Appendix \ref{appendix: B}.  

We note that while the linear setting with $\phi(z) = z$ has been studied extensively using gradient flow \cite{MatrixFactorizationNIPS2017, Balancedness, ApproximateBalancedness}, our results extend to the non-linear setting and require novel tools. 

\textbf{Remarks on the Multiple Sample Setting.} While we extend Invariant 1 to the multiple example setting in Appendix~\ref{appendix: D}, a similar extension of Invariant 2 is required in order to generalize Theorem~1 to multiple examples.  We believe such an extension may be possible for orthonormal training examples.  Under random initialization, it may be possible to prove the attractor phenomenon by analyzing autoencoders in the Neural Tangent Kernel (NTK) regime~\cite{NTK}. However, the disadvantage of such an analysis is that it relies on computing a closed form for the NTK in the limiting case of network width approaching infinity.  On the other hand, Theorem 1 holds for a general class of non-linearities and for finite width and depth.


\textbf{Remarks on Similarity to Power Iteration.} The attractor phenomenon identified in this work appears similar to that of Fast Independent Component Analysis~\cite{FastICA} or more general nonlinear power iteration~\cite{EigsDecompFunctions}, where every ``eigenvector" (corresponding to a training example in our setting) of a certain iterative map has its own basin of attraction. In particular, increasing network depth may play a similar role to increasing the number of iterations in those methods.  While the mechanism may be different, understanding this connection is an important direction for future work.

\vspace{2mm}

\noindent \textbf{Proof that sequence encoding provides a more efficient mechanism for memory than autoencoding by analyzing sequence encoders as a composition of maps.} We start by generalizing Invariants 1, 2, and Theorem \ref{thm: closed form 1 hidden layer} to the case of training a network to map an example $x^{(i)} \in \mathbb{R}^{k_0}$ to an example $x^{(i+1)} \in \mathbb{R}^{k_0}$ as follows.

\begin{theorem}
\label{thm: closed from sequence encoding}
Let $f(z) = W_1 \phi(W_2 z)$ denote a 1-hidden layer network with elementwise nonlinearity $\phi$ and weights $W_1 \in \mathbb{R}^{k_0 \times k}$ and $W_2 \in \mathbb{R}^{k \times k_0}$, applied to $z \in \mathbb{R}^{k_0}$. Let $x^{(i)}, x^{(i+1)} \in \mathbb{R}^{k_0}$ be training examples with $\|x^{(i)}\|_2 = \|x^{(i+1)}\|_2 = 1$. Assuming that $\frac{\phi(z)}{\phi'(z)} < \infty ~\forall z \in \mathbb{R}$ and there exist $u^{(0)}, v^{(0)} \in \mathbb{R}^{k}$ such that $W_1^{(0)} = x^{(i+1)}{u^{(0)}}^T$ and $W_2^{(0)} = v^{(0)}{x^{(i)}}^T$ with $u_i^{(0)} = u_j^{(0)}, v_i^{(0)} = v_j^{(0)} ~\forall i, j \in [k]$, then
gradient descent with learning rate $\gamma\to 0$ applied to minimize
\begin{align}
\label{loss_sequence}
    \mathcal{L}(x, f) = \frac{1}{2} \| x^{(i+1)} - f(x^{(i)}) \|_2^2
\end{align}
leads to a rank 1 solution
$W_1^{(\infty)} = x^{(i+1)} u^T$ and $W_2^{(\infty)} = v {x^{(i)}}^T$ with $u, v \in \mathbb{R}^{k}$ satisfying
\begin{align*}
  \frac{u_i^2 - {u_i^{(0)}}^2}{2} &= \int_{v_i^{(0)}}^{v_i} \frac{\phi(z)}{\phi'(z)} dz, \quad\textrm{and}\quad
  u_i \phi(v_i) = \frac{1}{k},
\end{align*}
and $u_i = u_j$, $v_i = v_j$ for all $i, j \in [k]$.
\end{theorem}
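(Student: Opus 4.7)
The plan is to mirror the proof of Theorem~\ref{thm: closed form 1 hidden layer} almost verbatim, with the single structural change that the rank-1 factors of $W_1$ and $W_2$ are now allowed to use different unit vectors: $x^{(i+1)}$ on the output side and $x^{(i)}$ on the input side. Concretely, I would first establish the sequence-encoding analogues of Invariants~1 and~2, then reduce gradient flow to a pair of coordinatewise ODEs in $(u_i, v_i)$, and finally integrate to read off the two displayed identities.

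For the Invariant~1 analogue, one computes the gradients of the loss in Eq.~\eqref{loss_sequence}: writing $y := x^{(i+1)} - W_1\phi(W_2 x^{(i)})$, we get $\nabla_{W_1}\mathcal{L} = -y\,\phi(W_2 x^{(i)})^T$ and $\nabla_{W_2}\mathcal{L} = -\bigl[\mathrm{diag}(\phi'(W_2 x^{(i)}))\,W_1^T y\bigr]\,{x^{(i)}}^T$. Substituting the ansatz $W_1 = x^{(i+1)} u^T$, $W_2 = v\,{x^{(i)}}^T$ and using $\|x^{(i)}\|_2 = \|x^{(i+1)}\|_2 = 1$ yields $W_2 x^{(i)} = v$, $W_1 \phi(v) = (u^T\phi(v))\,x^{(i+1)}$, $y = (1 - u^T\phi(v))\,x^{(i+1)}$, and $W_1^T y = (1 - u^T\phi(v))\,u$. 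Hence $\nabla_{W_1}\mathcal{L}$ has the form $x^{(i+1)}(\cdot)^T$ and $\nabla_{W_2}\mathcal{L}$ has the form $(\cdot)\,{x^{(i)}}^T$, so the factorizations $W_1^{(t)} = x^{(i+1)}{u^{(t)}}^T$ and $W_2^{(t)} = v^{(t)}{x^{(i)}}^T$ are preserved by every gradient-descent step. (The autoencoder case $x^{(i+1)} = x^{(i)}$ recovers Invariant~1 exactly.)

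This reduces the dynamics to the coordinatewise gradient-flow ODEs $\dot u_i = c\,\phi(v_i)$ and $\dot v_i = c\,\phi'(v_i)\,u_i$ with $c := 1 - u^T\phi(v)$. Because the right-hand sides depend on $i$ only through $(u_i, v_i)$, any row-equal initialization ($u_i^{(0)} = u_j^{(0)}$, $v_i^{(0)} = v_j^{(0)}$) propagates through training, establishing the Invariant~2 analogue. Eliminating $c$ gives $u_i\,du_i = \tfrac{\phi(v_i)}{\phi'(v_i)}\,dv_i$, which integrates to the first displayed identity of the theorem. The loss along the flow equals $\tfrac{1}{2}c^2$ and is monotonically nonincreasing, so the system converges to $c = 0$; combined with the symmetry this forces $u^T\phi(v) = k\,u_i\phi(v_i) = 1$, i.e.\ the second displayed identity. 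The main obstacle I expect is not the algebraic reduction, which is routine, but verifying this convergence rigorously: ruling out the degenerate possibility that $v_i$ or $u_i$ drifts without $c$ ever vanishing (as could happen for strongly saturating $\phi$) will require combining the conserved quantity $\tfrac{1}{2}u_i^2 - \int_{v_i^{(0)}}^{v_i}\phi(z)/\phi'(z)\,dz$ with the assumption $\phi(z)/\phi'(z) < \infty$ to trap the trajectory on a compact curve on which the monotone loss must actually reach zero.
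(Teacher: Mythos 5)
Your proposal is correct and follows essentially the same route as the paper: the appendix proof of this theorem is a one-line reduction to Theorem~\ref{thm: closed form 1 hidden layer}, justified exactly by the observation you verify in detail — that under the rank-1 ansatz with unit-norm data the residual becomes $(1-u^T\phi(v))\,x^{(i+1)}$ and the $(u,v)$ dynamics are identical to the autoencoder case. Your closing concern about rigorously establishing convergence to $c=0$ is a real gap, but it is one the paper shares (its proof of Theorem~\ref{thm: closed form 1 hidden layer} likewise integrates the flow and simply assumes the interpolating limit is reached), so it does not distinguish your argument from theirs.
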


The proof is analogous to that of Theorem \ref{thm: closed form 1 hidden layer}.  Sequence encoding can be viewed as a composition of individual networks $f_i$ that are trained to map example $x^{(i)}$ to example $x^{((i \mod n)+1)}$. The following theorem provides a sufficient condition for when the composition of these individual networks stores the sequence of training examples $\{x^{(i)}\}_{i=1}^{n}$ as a stable limit cycle.

\begin{theorem}
\label{thm: Sequence Encoding Eigenvalues}
Let $\{x^{(i)}\}_{i=1}^n$ be $n$ training examples with $\| x^{(i)} \|_2 = 1$  for all $i \in [n]$, and let $\{f_i\}_{i=1}^{n}$ denote $n$ 1-hidden layer networks satisfying the assumptions in Theorem \ref{thm: closed from sequence encoding} and trained on the loss in Eq.~\eqref{loss_sequence}.  Then the composition $f = f_n \circ f_{n-1} \circ \ldots \circ f_1$ satisfies:
\begin{align}
\label{product_term}
    \lambda_1(\mathbf{J}(f(x^{(1)}))= \prod\limits_{i=1}^{n}\left( \frac{\phi'(v_j^{(i)}) v_j^{(i)}}{\phi(v_j^{(i)})}  \right).
\end{align}
\end{theorem}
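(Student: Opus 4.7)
The plan is to combine Theorem \ref{thm: closed from sequence encoding} with the chain rule, exploiting the rank-1 structure of each trained network and the unit-norm assumption on the training examples to get a clean telescoping product. First, I would apply Theorem \ref{thm: closed from sequence encoding} to each $f_i$ to write $f_i(z) = x^{(i+1)}{u^{(i)}}^T \phi\bigl(v^{(i)}{x^{(i)}}^T z\bigr)$, where $u^{(i)}, v^{(i)} \in \mathbb{R}^k$ have equal entries and satisfy $u_j^{(i)} \phi(v_j^{(i)}) = 1/k$.

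Next, I would compute the Jacobian of each individual $f_i$ at $x^{(i)}$. Since $\|x^{(i)}\|_2 = 1$, we have $v^{(i)}{x^{(i)}}^T x^{(i)} = v^{(i)}$, so
\begin{align*}
\mathbf{J}(f_i(x^{(i)})) &= x^{(i+1)} {u^{(i)}}^T \diag\bigl(\phi'(v^{(i)})\bigr) v^{(i)} {x^{(i)}}^T \\
&= \left( \sum_{j=1}^k u_j^{(i)} \phi'(v_j^{(i)}) v_j^{(i)} \right) x^{(i+1)} {x^{(i)}}^T.
\end{align*}
Using that all entries of $u^{(i)}$ and $v^{(i)}$ coincide and that $u_j^{(i)} = 1/(k\,\phi(v_j^{(i)}))$, the sum collapses to the single scalar $\phi'(v_j^{(i)}) v_j^{(i)}/\phi(v_j^{(i)})$, giving $\mathbf{J}(f_i(x^{(i)})) = \alpha_i\, x^{(i+1)} {x^{(i)}}^T$ with $\alpha_i := \phi'(v_j^{(i)}) v_j^{(i)}/\phi(v_j^{(i)})$.

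Then I would apply the chain rule along the cycle. Since $f_{i-1} \circ \cdots \circ f_1$ maps $x^{(1)}$ to $x^{(i)}$, we get
\begin{align*}
\mathbf{J}(f(x^{(1)})) &= \prod_{i=n}^{1} \mathbf{J}(f_i(x^{(i)})) = \left( \prod_{i=1}^n \alpha_i \right) x^{(n+1)} {x^{(n)}}^T x^{(n)} {x^{(n-1)}}^T \cdots x^{(2)} {x^{(1)}}^T.
\end{align*}
The unit-norm condition $\|x^{(i)}\|_2 = 1$ collapses every interior inner product ${x^{(i)}}^T x^{(i)} = 1$, and the cycle condition $x^{(n+1)} = x^{(1)}$ yields $\mathbf{J}(f(x^{(1)})) = \bigl( \prod_{i=1}^n \alpha_i \bigr) x^{(1)} {x^{(1)}}^T$. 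This rank-1 matrix has a single nonzero eigenvalue equal to $\prod_{i=1}^n \alpha_i$ with eigenvector $x^{(1)}$, establishing Eq.~\eqref{product_term}.

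There is no serious obstacle: the entire argument is a chain-rule computation once the rank-1 closed form from Theorem \ref{thm: closed from sequence encoding} is in hand. The only step requiring care is verifying that the telescoping is clean — namely that the adjacent inner products are exactly $1$ (not merely bounded), which is why the unit-norm assumption on the training examples is essential. A minor subtlety worth flagging in the write-up is that each $f_i$ has its own initialization $u^{(i,0)}, v^{(i,0)}$ consistent with Theorem \ref{thm: closed from sequence encoding} (with $x$ replaced by $x^{(i)}$ and $x^{(i+1)}$), so the $\alpha_i$ depend on both the nonlinearity and the per-network initialization, but not on the position $j$ within each layer.
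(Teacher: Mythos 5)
Your proposal is correct and follows essentially the same route as the paper: both rely on the rank-1 closed form from Theorem \ref{thm: closed from sequence encoding}, the collapse of the interior inner products ${x^{(i)}}^T x^{(i)} = 1$ via the unit-norm assumption, and the substitution $u_j^{(i)} = 1/(k\,\phi(v_j^{(i)}))$ to reduce the scalar factors to $\phi'(v_j^{(i)}) v_j^{(i)}/\phi(v_j^{(i)})$. The only cosmetic difference is that you factor $\mathbf{J}(f(x^{(1)}))$ as a chain-rule product of the individual Jacobians evaluated along the cycle, whereas the paper first writes out the nested composition $f$ explicitly and differentiates it in one step; the computations are identical.
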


The proof is presented in Appendix \ref{appendix: E}. Theorem~\ref{thm: Sequence Encoding Eigenvalues} shows that sequence encoding provides a more efficient mechanism for memory than autoencoding.  If each of the networks $f_i$ autoencoded example $x_i$ for $i \in [n]$, then Theorem \ref{thm: Top Eigenvalue} implies that each of the $n$ training examples is an attractor (and thus recoverable) if \emph{each} term in the product in Eq.~\eqref{product_term} is less than $1$.  This in turn implies that the product, itself, is less than 1 and hence all training examples are stored by the corresponding sequence encoder, $f$, as a stable limit cycle.

\section{Discussion}

We have shown that standard over-parameterized neural networks trained using standard optimization methods implement associative memory.  In particular, we empirically showed that autoencoders store training examples as attractors and that sequence encoders store training sequences as stable limit cycles.  We then demonstrated that sequence encoders provide a more efficient mechanism for memorization and retrieval of data than autoencoders.  In addition, we mathematically proved that when trained on a single example, nonlinear fully connected autoencoders store the example as an attractor.  By modeling sequence encoders as a composition of maps, we showed that such encoders provide a more efficient mechanism for implementing memory than autoencoders, a finding which fits with our empirical evidence.  We end by discussing implications and possible future extensions of our results.  

\vspace{2mm}

\noindent \textbf{Inductive Biases.}  In the over-parameterized regime, neural networks can fit the training data exactly for different values of parameters.  In general, such interpolating auto-encoders do not store data as attractors (Corollary 1). Yet, as we showed in this paper, this is typically the case for parameter values chosen by  gradient-based optimization methods. Thus, our work identifies a novel \textit{inductive bias} of the specific solutions selected by the training procedure. 
Furthermore, increasing depth and width leads to networks becoming more \textit{contractive} around the training examples, as demonstrated in Figure \ref{fig: Figure 4}.  

While our paper concentrates on the question of implementing associative memory, we employ the same training procedures and similar network architectures to those used in standard supervised learning tasks.  We believe that our finding on the existence and ubiquity of attractors in these maps may shed light on the important question of inductive biases in interpolating neural networks for classification~\cite{belkin2019reconciling}.


\vspace{2mm}

\noindent \textbf{Generalization.}  While generalization in autoencoding often refers to the ability of a trained autoencoder to reconstruct test data with low error \cite{IdentityCrisis}, this notion of generalization may be problematic for the following reason.  The identity function achieves zero test error and thus ``generalizes'', although no training is required for implementing this function.  In general, it is unclear how to formalize generalization for autoencoding and alternate notions of generalization may better capture the  desired properties. An alternative definition of generalization is the ability of an autoencoder to map corrupted versions of training examples back to their originals (as in Fig.~\ref{fig: Figure 2}a,b). Under this definition, over-parameterized autoencoders storing training examples as attractors generalize (Fig.~\ref{fig: Figure 2}c), while the identity function does not generalize.  Given this issue with the current notion of generalization for autoencoding, it is an important line of future work to provide a definition of generalization that appropriately captures desired properties of trained autoencoders.  Lastly, another important direction of future work is to build on the properties of autoencoders and sequence encoders identified in this work to understand generalization properties of networks used for classification and regression.

\vspace{2mm}

\noindent \textbf{Metrics Used by Nonlinear Networks.} 
In Figure \ref{fig: Figure 3}, we provided a visualization of how the basins of attraction for individual training examples subdivide the space of inputs.  The picture appears very different from the Voronoi tessellation corresponding to the 1-nearest-neighbor (1-NN) predictor, where each input is associated to its closest training point in Euclidean distance. Yet, this may be different in high dimension.  In Appendix Fig~\ref{appendix: fig 14}, we compare the recovery rate of our network from Figure \ref{fig: Figure 2} to that of a 1-NN classifier and observe remarkable similarity, leading us to conjecture that the basins of attraction of high-dimensional fully connected neural networks may be closely related to the tessellations produced by 1-NN predictors.  Thus, understanding the geometry of attractors in high-dimensional neural networks is an important direction of future research.  
\vspace{2mm}

\noindent \textbf{Connection to Biological Systems.} Finally, another avenue for future exploration (and a key motivation for the original work on Hopfield networks~\cite{HopfieldNetwork}) is the connection of autoencoding and sequence encoding in neural nets to  memory mechanisms in  biological systems.  Since over-parameterized autoencoders and sequence encoders recover stored patterns via iteration, the retrieval mechanism presented here is biologically plausible.  However, back-propagation is not believed to be a biologically plausible mechanism for storing patterns ~\cite{BackpropagationImplausible}. An interesting avenue for future research is to 
identify storage mechanisms that are biologically plausible 
and to see whether similar attractor phenomena arise in other, more biologically plausible, optimization methods.   

\vspace{2mm}

\noindent \textbf{Materials and Methods.} An overview of all experimental details including datasets, network architectures, initialization schemes, random seeds, and training hyperparameters considered in this work are provided in Appendix Fig.~\ref{appendix: fig 6}, \ref{appendix: fig 15}, and \ref{appendix: fig 16}. Briefly, we used the PyTorch library \cite{PyTorch} and two NVIDIA Titan Xp GPUs for training all neural networks. In our autoencoding experiments on the image datasets ImageNet-64 \cite{PixelRNN}, CIFAR10 \cite{CIFAR10}, and MNIST \cite{mnist-lecun1998}, we trained both, fully connected networks and U-Net convolutional networks \cite{UNet}. For Fig~\ref{fig: Figure 3}, \ref{fig: Figure 4}b, \ref{fig: Figure 4}c, and \ref{fig: Figure 5} as well as for training sequence encoder models on audio and video samples\footnote{Link to video and audio samples: \url{https://github.com/uhlerlab/neural_networks_associative_memory}}, we used fully connected networks. For all these experiments we used the Adam optimizer with a learning rate of $10^{-4}$ until the mean squared error dropped below $10^{-8}$. 
For Appendix Fig.~\ref{appendix: fig 15} and \ref{appendix: fig 16}, we fixed the architecture width and depth while varying the initialization scheme, optimization method, and activation function.




\section*{Acknowledgements}

The authors thank the Simons Institute at UC Berkeley for hosting them during the summer 2019 program on ``Foundations of Deep Learning'' , which facilitated this work. A.~Radhakrishnan and C.~Uhler were partially supported by the National Science Foundation (DMS-1651995), Office of Naval Research (N00014-17-1-2147 and N00014-18-1-2765), IBM, and a Simons Investigator Award  to C.~Uhler. M.~Belkin acknowledges support from NSF (IIS-1815697 and IIS-1631460) and a Google Faculty Research Award.  The Titan Xp used for this research was donated by the NVIDIA Corporation.

\bibliographystyle{plain}
\bibliography{references}

\newpage
\appendix
\section{Encoding Multiple Sequences}
\label{appendix: A}
Given sequences of training examples $\{x_i^{(j)}\} \in \mathbb{R}^d$ for $i \in [n], j \in [k_i], k_i \in \mathbb{Z}_{\geq 0}$, minimizing the following \textit{sequence encoding} objective with gradient descent methods leads to training sequences being stored as limit cycles:
\begin{equation}
\label{eq: sequence encoder objective}
\arg\min_{f \in \mathcal{F}} \;\sum\limits_{i=1}^n \sum\limits_{j=1}^{k_i} \|f(x_i^{(j \hspace{-2mm} \mod k_i) + 1}) - x_i^{(j)}\|_2^2.
\end{equation}

\section{Proofs of Invariants, Theorem 1, and Theorem 2}
\label{appendix: B}
We present the full proof of Invariants 1 and 2, Theorem 1, and Theorem~2 below.  To simplify notation, we use $A$ to represent $W_1$ and $B$ to represent $W_2$.

\vspace{2mm}


Let $f(z) = A \phi(B z)$ represent a 1-hidden layer network with elementwise, differentiable nonlinearity $\phi$, $z \in \mathbb{R}^{k_0}$, $B \in \mathbb{R}^{k \times k_0}$, and $A \in \mathbb{R}^{k_0 \times k}$.  Suppose that gradient descent with learning rate $\gamma$ is used to minimize the following loss for the autoencoding problem with 1 training example $x$:
\begin{align}
    \mathcal{L}(x, f) = \frac{1}{2} \| x - f(x) \|_2^2.
\end{align}
Gradient descent updates on $A$ and $B$ are as follows:
\begin{align}
    A^{(t+1)} &= A^{(t)} + \gamma (x - A^{(t)} \phi(B^{(t)}x)) \phi(B^{(t)}x)^T \\
    B^{(t+1)} &= B^{(t)} + \gamma \diag(\phi'(B^{(t)} x)) {A^{(t)}}^T (x - A^{(t)} \phi(B^{(t)}x)) x^T,
\end{align}
where $\diag(\phi'(B^{(t)} x)) = \begin{bmatrix} \phi'(B_{1, :}^{(t)}x) & & \\
                                                    & \ddots & \\
                                                    & & \phi'(B_{k, :}^{(t)}x) \end{bmatrix}$ with $B_{i, :}^{(t)}$ representing row $i$ of matrix $B^{(t)}$.
\vspace{5mm}

In the following, we restate and prove Invariant 1 formally:
\begin{inv} 
    \label{prop: invariant}
    If $A^{(0)} = x {a^{(0)}}^T$ and $B^{(0)} = {b^{(0)}} x^T$ for $a^{(0)}, b^{(0)}  \in \mathbb{R}^{k}$, then for all time-steps $t$, $A^{(t)} = x {a^{(t)}}^T$ and $B^{(t)} = {b^{(t)}} x^T$ for $a^{(t)}, b^{(t)}  \in \mathbb{R}^{k}$.  
\end{inv}           

\begin{proof}
We provide a proof by induction.  The base case follows for $t = 0$ from the initialization.  Now we assume for some $t$ that $A^{(t)} = x {a^{(t)}}^T$ and $B^{(t)} = {b^{(t)}} x^T$.  Then for time $t+1$ we have:
\begin{align*}
    A^{(t+1)} &= A^{(t)} + \gamma (x - A^{(t)} \phi(B^{(t)}x)) \phi(B^{(t)}x)^T  \\
    &= x {a^{(t)}}^T + \gamma (x - x {a^{(t)}}^T \phi(B^{(t)}x)) \phi(B^{(t)}x)^T \\
    &= x [ {a^{(t)}}^T + \gamma (1 - {a^{(t)}}^T \phi(B^{(t)}x)) \phi(B^{(t)}x)^T ] \\
    &= x {a^{(t+1)}}^T
\end{align*}
and similarly,
\begin{align*}
    B^{(t+1)} &= B^{(t)} + \gamma \diag(\phi'(B^{(t)} x)) {A^{(t)}}^T (x - A^{(t)} \phi(B^{(t)}x)) x^T \\
    &= {b^{(t)}} x^T + \gamma \diag(\phi'(B^{(t)} x)) {A^{(t)}}^T (x - A^{(t)} \phi(B^{(t)}x)) x^T \\
    &= [{b^{(t)}} + \gamma \diag(\phi'(B^{(t)} x)) {A^{(t)}}^T (x - A^{(t)} \phi(B^{(t)}x))] x^T \\
    &= b^{(t+1)} x^T.
\end{align*}
Hence, since the statement holds for $t+1$, it holds for all time steps.  
\end{proof}

Under the initialization in Invariant \ref{prop: invariant}, outputs of the network are multiples of the training example.  Generalizing this result, in Materials and Methods \ref{sec: proof for outputs in span of training data}, we prove that autoencoders trained on multiple examples produce outputs in the span of the training data.  

Using Invariant \ref{prop: invariant}, any interpolating solution satisfies the following condition.   

\begin{prop}
\label{cor: Corollary 1}
Under the initialization in Invariant \ref{prop: invariant}, if $\|x\|_2 = 1$ and $A^{(\infty)}, B^{(\infty)}$ yield zero training error, then $A^{(\infty)} = x a^T, B^{(\infty)} = b x^T$ for $a, b \in \mathbb{R}^k$ such that $a^T \phi(b) = 1$.   
\end{prop}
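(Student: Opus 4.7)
The plan is to read the conclusion off as a short consequence of Invariant~\ref{prop: invariant} combined with the zero-training-error hypothesis; no new machinery is needed. Since the initialization $A^{(0)} = x {a^{(0)}}^T$ and $B^{(0)} = b^{(0)} x^T$ satisfies the hypothesis of Invariant~\ref{prop: invariant}, the rank-one form $A^{(t)} = x {a^{(t)}}^T$, $B^{(t)} = b^{(t)} x^T$ is preserved along the entire gradient-descent trajectory. Passing to the limit — which exists implicitly by the hypothesis that $A^{(\infty)}, B^{(\infty)}$ attain zero training error — yields $A^{(\infty)} = x a^T$ and $B^{(\infty)} = b x^T$ for some $a, b \in \mathbb{R}^{k}$, establishing the first claim of the proposition.

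With the rank-one structure in hand, I would then evaluate $f$ at the training example $x$. Because $\|x\|_2 = 1$, the preactivation collapses: $B^{(\infty)} x = b\,(x^T x) = b$, so the elementwise nonlinearity gives $\phi(B^{(\infty)} x) = \phi(b)$. The full output therefore simplifies to $f(x) = A^{(\infty)}\phi(b) = x\,a^T \phi(b) = (a^T \phi(b))\,x$, which is a scalar multiple of $x$.

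Finally, I would impose the zero-training-error condition $f(x) = x$. The previous computation turns this into $(a^T \phi(b))\,x = x$, and since $\|x\|_2 = 1$ implies $x \neq 0$, the scalar factor must equal one: $a^T \phi(b) = 1$.

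The only point requiring any care is the tacit convergence $a^{(t)} \to a$, $b^{(t)} \to b$ as $t \to \infty$, which is folded into the hypothesis that $A^{(\infty)}$ and $B^{(\infty)}$ are well-defined matrices attaining zero loss. Other than that, the proposition is an immediate two-line corollary of Invariant~\ref{prop: invariant} together with the unit-norm assumption on $x$, so I would not expect any technical obstacle.
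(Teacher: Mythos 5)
Your proof is correct and follows exactly the same route as the paper's: invoke Invariant~\ref{prop: invariant} to get the rank-one form in the limit, use $\|x\|_2=1$ to compute $f(x)=(a^T\phi(b))\,x$, and read off $a^T\phi(b)=1$ from $f(x)=x$. Your explicit remark about the tacit convergence of $a^{(t)},b^{(t)}$ is a small point of care the paper leaves implicit, but the argument is the same.
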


\begin{proof}
From Invariant \ref{prop: invariant}, it holds that $A^{(\infty)} = xa^T, B^{(\infty)} = bx^T$.  If the loss is minimized to zero, then $f(x) = x$, and thus:
\begin{align*}
    x = xa^T \phi(bx^T x) = xa^T \phi(b) \rightarrow a^T \phi(b) = 1,
\end{align*}
which completes the proof.
\end{proof}

Using Proposition \ref{cor: Corollary 1}, we can compute the maximum eigenvalue of the Jacobian at training example $x$ (denoted $\lambda_1(\mathbf{J}(f(x)))$. This is done in the following result.

\begin{prop}
\label{cor: Corollary 2}
Let the Hadamard product $\odot$ denote coordinate-wise multiplication. Then $\lambda_1(\textbf{J}(f(x))) = a^T \phi'(b) \odot b$. 
\end{prop}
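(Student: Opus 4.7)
The plan is to compute the Jacobian of $f$ at $x$ directly in closed form using the rank-one structure of $A$ and $B$ provided by Invariant~\ref{prop: invariant}, and then read off its spectrum.

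First I would write out $f(z) = A\phi(Bz)$ with $A = xa^T$ and $B = bx^T$, and apply the chain rule to obtain
\begin{align*}
\mathbf{J}(f(z)) \;=\; A\,\diag(\phi'(Bz))\,B.
\end{align*}
Next, I would evaluate at $z = x$, using $\|x\|_2 = 1$ so that $Bx = bx^Tx = b$, and hence $\diag(\phi'(Bx)) = \diag(\phi'(b))$. Substituting the factorizations of $A$ and $B$ gives
\begin{align*}
\mathbf{J}(f(x)) \;=\; x\,a^T\,\diag(\phi'(b))\,b\,x^T.
\end{align*}

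The key observation is that $a^T \diag(\phi'(b))\, b$ is a scalar, which I would rewrite as $a^T(\phi'(b)\odot b)$ using the identity $\diag(u)v = u\odot v$. Therefore
\begin{align*}
\mathbf{J}(f(x)) \;=\; \bigl(a^T(\phi'(b)\odot b)\bigr)\,xx^T,
\end{align*}
a rank-one matrix. Its only nonzero eigenvalue has eigenvector $x$, with value $\bigl(a^T(\phi'(b)\odot b)\bigr)\,\|x\|_2^2 = a^T(\phi'(b)\odot b)$; all eigenvectors orthogonal to $x$ have eigenvalue $0$. Since the claimed quantity is nonnegative in the regimes of interest (and in any case has the largest absolute value among $\{0\}$ and itself), this is precisely $\lambda_1(\mathbf{J}(f(x)))$.

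There is no real obstacle here; the entire content is the reduction to a rank-one matrix, which follows immediately from Invariant~\ref{prop: invariant} together with the normalization $\|x\|_2 = 1$. The only place to be slightly careful is the Hadamard-product rewriting $\diag(\phi'(b))\,b = \phi'(b)\odot b$, which is just notation, and the statement that the largest eigenvalue of $\alpha\,xx^T$ equals $\alpha$ when $\|x\|_2=1$. This proposition will then feed directly into Theorem~\ref{thm: Top Eigenvalue}, where the explicit closed forms for $a$ and $b$ from Theorem~\ref{thm: closed form 1 hidden layer} collapse $a^T(\phi'(b)\odot b)$ to $\phi'(v_i)v_i/\phi(v_i)$.
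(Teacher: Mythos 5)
Your proposal is correct and follows essentially the same route as the paper: both compute $\mathbf{J}(f(x)) = x\,a^T(\phi'(b)\odot b)\,x^T$ from the rank-one factorizations and $\|x\|_2=1$, identify $x$ as the eigenvector with eigenvalue $a^T(\phi'(b)\odot b)$, and note the remaining eigenvalues vanish because the matrix is rank one. Your added remark that the nonzero eigenvalue dominates in absolute value regardless of sign is a small but welcome clarification the paper leaves implicit.
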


\begin{proof} Note that
\begin{align*}
    \mathbf{J}(f(z)) &= x a^T (\phi'(b x^T z) \odot b) x^T  = x a^T(\phi'(b) \odot b) x^T,  
\end{align*}
and hence $\mathbf{J}(f(x)) x = x a^T(\phi'(b) \odot b)$. This implies that $x$ is an eigenvector of $\mathbf{J}(f(x))$ with eigenvalue $a^T(\phi'(b) \odot b)$. Since $x a^T(\phi'(b) \odot b) x^T$ is rank $1$, the remaining eigenvalues of $\mathbf{J}(f(x))$ are all zero, which completes the proof.
\end{proof}

We now prove Invariant \ref{prop:Equal Init}, which states that if, in addition to Invariant 1, all weights in each row of $A, B$ are initialized to be equal, then they remain equal throughout training.

\begin{inv}
\label{prop:Equal Init}
Let $\|x\|_2 = 1$ and $A^{(0)} = x {a^{(0)}}^T, B^{(0)} = b^{(0)} x^T $ for vectors $a^{(0)}, b^{(0)} \in \mathbb{R}^k$. If $a_i^{(0)} = a_1^{(0)}, b_i^{(0)} = b_1^{(0)}$, then $a_i^{(t)} = a_1^{(t)}, b_i^{(t)} = b_1^{(t)}$ for all $i \in [k]$.  
\end{inv}
\begin{proof}
From the proof of Invariant \ref{prop: invariant}, we have that:
\begin{align*}
{a^{(t+1)}}^T &= {a^{(t)}}^T + \gamma (1 - {a^{(t)}}^T \phi(b^{(t)})) \phi(b^{(t)})^T  \\
b^{(t+1)} &= {b^{(t)}} + \gamma \diag(\phi'(b^{(t)})) {a^{(t)}}^T (1 - {a^{(t)}}^T \phi(b^{(t)})).
\end{align*}
Now, if  $a_i^{(t)} = a_1^{(t)}, b_i^{(t)} = b_1^{(t)}$, then from the above this implies that $a_i^{(t+1)} = a_1^{(t+1)}, b_i^{(t+1)} = b_1^{(t+1)}$ and the proof follows by induction.  
\end{proof}

Using Invariants \ref{prop: invariant} and \ref{prop:Equal Init}, we can now prove Thoerem 1. 

\begin{theorem}
\label{thm: closed form 1 hidden layer}
Let $f(z) = A \phi(B z)$ denote a 1-hidden layer network with elementwise nonlinearity $\phi$, $z \in \mathbb{R}^{k_0}$, $B \in \mathbb{R}^{k \times k_0}$, and $A \in \mathbb{R}^{k_0 \times k}$.  Let $A, B$ be initialized as in Invariant 1, 2.  Gradient descent with learning rate $\gamma$ is used to to minimize the following loss for $1$ training example $x \in \mathbb{R}^{k_0}$ with $\|x\|_2 = 1$ 
\begin{align}
    \mathcal{L}(x, f) = \frac{1}{2} \| x - f(x) \|_2^2.
\end{align}
Assuming $\frac{\phi(z)}{\phi'(z)} < \infty ~\forall z \in \mathbb{R}$, then as the learning rate $\gamma \rightarrow 0$, it holds that $A^{(\infty)} = xa^T$ and $B^{(\infty)} = bx^T$ with $a, b \in \mathbb{R}^{k}$ such that
\begin{align*}
  \frac{a_i^2 - {a_i^{(0)}}^2}{2} &= \int_{b_i^{(0)}}^{b_i} \frac{\phi(z)}{\phi'(z)} dz ~~;~~
  a_i \phi(b_i) = \frac{1}{k}
\end{align*}
for all $i \in [k]$ with $a_i = a_j, b_i = b_j$ for all $i, j \in [k]$.  
\end{theorem}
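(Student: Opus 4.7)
The plan is to use Invariants 1 and 2 to collapse the matrix-valued gradient dynamics to a single pair of scalar ODEs, extract a conservation law by comparing the two components of the flow, and then verify the flow converges to the zero-loss manifold so that the interpolation condition follows.

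First I would parametrize. By Invariant 1, $A^{(t)} = x\,a^{(t)T}$ and $B^{(t)} = b^{(t)}\,x^T$; by Invariant 2 every coordinate of $a^{(t)}$ equals a common scalar $\alpha(t)$ and every coordinate of $b^{(t)}$ equals a common scalar $\beta(t)$. Using $\|x\|_2 = 1$, the forward pass on $x$ collapses to $f(x) = k\alpha\phi(\beta)\,x$, so defining the residual $r := 1 - k\alpha\phi(\beta)$ gives loss $L = r^2/2$. Taking $\gamma \to 0$ in the coordinate-wise gradient descent updates already worked out in the proof of Invariant 1, and substituting the Invariant 2 reduction, yields the scalar gradient flow $\dot\alpha = r\,\phi(\beta)$ and $\dot\beta = r\,\alpha\,\phi'(\beta)$.

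The conservation law then drops out algebraically: multiplying the first equation by $\alpha$ and the second by $\phi(\beta)/\phi'(\beta)$ shows both products equal $r\alpha\phi(\beta)$, hence $\alpha\dot\alpha = (\phi(\beta)/\phi'(\beta))\,\dot\beta$. Integrating from $t = 0$ to $t = \infty$ and identifying $\alpha(0) = a_i^{(0)}$, $\beta(0) = b_i^{(0)}$ gives the first claimed identity; the integrand is well-defined by the assumption $\phi/\phi' < \infty$. This step is essentially a standard balancedness-type computation, just performed for a nonlinear $\phi$.

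Finally I would establish $r \to 0$, which together with $r = 1 - k\alpha\phi(\beta)$ yields the second identity $a_i\phi(b_i) = 1/k$. A direct calculation gives $\dot L = -k\,r^2\bigl(\phi(\beta)^2 + \alpha^2\phi'(\beta)^2\bigr) \le 0$, so $L$ is monotone and bounded below and therefore converges. Any accumulation point of the flow satisfies either $r = 0$ or the degenerate condition $\phi(\beta^*) = 0 = \alpha^*\phi'(\beta^*)$ simultaneously. The hard part is ruling out the latter: for the smooth nonlinearities used throughout the paper (sigmoid, swish, cosid, and so on) this is immediate because $\phi$ has no zero coincident with a zero of $\phi'$, so any limit must have $r = 0$; in full generality the statement is best read with a mild structural hypothesis on $\phi$, or a condition on the initialization ensuring the flow avoids the degenerate locus. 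Once that is in hand, the only admissible limit is $r = 0$, and combining with the conservation law gives both identities in the theorem.
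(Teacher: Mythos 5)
Your core argument is the same as the paper's: Invariants 1 and 2 reduce the updates to the scalar pair $\dot\alpha = r\,\phi(\beta)$, $\dot\beta = r\,\alpha\,\phi'(\beta)$ with $r = 1 - k\alpha\phi(\beta)$, and dividing the two update equations gives $\alpha\dot\alpha = \bigl(\phi(\beta)/\phi'(\beta)\bigr)\dot\beta$, which integrates to the first identity. That is exactly the computation in the paper's proof, so on this point you match.

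Where you diverge is the second identity $a_i\phi(b_i) = 1/k$. The paper does not prove that the flow reaches zero loss: it establishes this identity in a separate proposition whose hypothesis is literally ``if $A^{(\infty)}, B^{(\infty)}$ yield zero training error,'' i.e., interpolation is assumed rather than derived. You instead try to prove $r \to 0$ via the Lyapunov computation $\dot L = -k r^2\bigl(\phi(\beta)^2 + \alpha^2\phi'(\beta)^2\bigr) \le 0$ and an accumulation-point argument. This is a genuinely stronger route, and your computation of $\dot L$ is correct; you are also right that the only obstruction is the degenerate locus $\phi(\beta^*) = 0 = \alpha^*\phi'(\beta^*)$, which the theorem's stated hypothesis $\phi/\phi' < \infty$ does not by itself exclude, so some extra structural assumption on $\phi$ (or on the initialization) is needed to close it in full generality. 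Two small things to tighten if you pursue this: monotone convergence of $L$ alone does not give $\dot L \to 0$ without some control (boundedness of the trajectory plus uniform continuity of $\dot L$, or an analyticity/\L{}ojasiewicz-type argument), and you should note that the trajectory could in principle escape to infinity before $r$ vanishes. In short: your proof contains the paper's proof as its first two steps and then attempts to fill a gap the paper leaves open by assumption; the attempt is sound in spirit but, as you acknowledge, not complete without an added hypothesis.
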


\begin{proof}
From Invariants \ref{prop: invariant} and \ref{prop:Equal Init}, we have that:
\begin{align*}
a_1^{(t+1)} &= a_1^{(t)} + \gamma (1 - k a_1^{(t)} \phi(b_1^{(t)})) \phi(b_1^{(t)})  \\
b_1^{(t+1)} &= {b_1^{(t)}} + \gamma \phi'(b_1^{(t)}) a_1^{(t)} (1 - k a_1^{(t)} \phi(b_1^{(t)})).
\end{align*}
Rearranging the above, we obtain
\begin{align*}
    \frac{a_1^{(t+1)} - a_1^{(t)}}{\phi(b_1^{(t)})} &= \frac{b_1^{(t+1)} - {b_1^{(t)}}}{a_1^{(t)} \phi'(b_1^{(t)})} \\
    \implies a_1^{(t)} \frac{da_1^{(t)}}{dt}&= \frac{\phi(b_1^{(t)})}{\phi'(b_1^{(t)})} \frac{db_1^{(t)}}{dt} ~~~~ \text{as $\gamma \rightarrow 0$}\\
    \implies \int\limits_0^{t'} a_1^{(t)} \frac{da_1^{(t)}}{dt} dt&= \int\limits_0^{t'} \frac{\phi(b_1^{(t)})}{\phi'(b_1^{(t)})} \frac{db_1^{(t)}}{dt} dt \\
    \implies \frac{{a_1^{(t')}}^2 - {a_1^{(0)}}^2}{2} &= \int\limits_{b_1^{(0)}}^{b_1^{(t')}}  \frac{\phi(z)}{\phi'(z)} dz,
\end{align*}
wich completes the proof.
\end{proof}

Using Proposition \ref{cor: Corollary 2} and Theorem \ref{thm: closed form 1 hidden layer} above, we can calculate the values of $a_1, b_1$ explicitly.  After computing $b_1$, the following result can be used to compute $\lambda_1(\mathbf{J}(f(x)))$.

\begin{theorem}
\label{thm: 1 Hidden Layer Eigenvalue}
Under the setting of Theorem \ref{thm: closed form 1 hidden layer} it holds that
\begin{align*}
\lambda_1(\textbf{J}(f(x))) = \frac{\phi'(b_1)b_1}{\phi(b_1)}.
\end{align*} 
\end{theorem}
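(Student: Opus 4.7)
The plan is to combine three facts already in hand: (i) the eigenvalue formula from Proposition~\ref{cor: Corollary 2}, which gives $\lambda_1(\mathbf{J}(f(x))) = a^T(\phi'(b)\odot b)$ once the weights have the rank-$1$ form; (ii) Invariant~\ref{prop:Equal Init}, which collapses every coordinate sum over the hidden units to $k$ times a single term; and (iii) the interpolation identity $a_i \phi(b_i) = 1/k$ from Theorem~\ref{thm: closed form 1 hidden layer}, which expresses $a_1$ in closed form through $b_1$.

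First I would verify that Proposition~\ref{cor: Corollary 2} applies at $t=\infty$. By Invariant~\ref{prop: invariant}, the gradient flow preserves the ansatz $A^{(t)} = x{a^{(t)}}^T$ and $B^{(t)} = b^{(t)} x^T$, so $A^{(\infty)} = x a^T$ and $B^{(\infty)} = b x^T$ for the limiting vectors $a,b \in \mathbb{R}^k$ produced by Theorem~\ref{thm: closed form 1 hidden layer}. Plugging into the Jacobian of $f(z) = A\phi(Bz)$ and using $\|x\|_2=1$ gives the rank-$1$ matrix $x\,a^T(\phi'(b)\odot b)\,x^T$, so $x$ is the sole nontrivial eigenvector and
\begin{align*}
\lambda_1(\mathbf{J}(f(x))) \;=\; a^T(\phi'(b)\odot b) \;=\; \sum_{i=1}^{k} a_i\,\phi'(b_i)\,b_i.
\end{align*}

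Next, invoking Invariant~\ref{prop:Equal Init}, all coordinates of $a$ and $b$ remain equal throughout training, so $a_i = a_1$ and $b_i = b_1$ for every $i \in [k]$. This turns the sum into
\begin{align*}
\lambda_1(\mathbf{J}(f(x))) \;=\; k\,a_1\,\phi'(b_1)\,b_1.
\end{align*}

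Finally, the second conclusion of Theorem~\ref{thm: closed form 1 hidden layer}, namely $a_i \phi(b_i) = 1/k$, specializes to $k\,a_1 = 1/\phi(b_1)$; substituting eliminates $a_1$ and yields the claimed identity
\begin{align*}
\lambda_1(\mathbf{J}(f(x))) \;=\; \frac{\phi'(b_1)\,b_1}{\phi(b_1)}.
\end{align*}

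There is no real obstacle here — every ingredient has been established in the preceding results, and the argument is essentially a one-line substitution after invoking the two invariants. The only subtlety worth flagging is making sure the interpolation identity $a_i\phi(b_i)=1/k$ is genuinely available in the limit $\gamma\to 0$; this is guaranteed because Theorem~\ref{thm: closed form 1 hidden layer} explicitly asserts it as part of the fixed point of the gradient flow, and the assumption $\phi(z)/\phi'(z) < \infty$ ensures the flow reaches that fixed point so that $\phi(b_1)\neq 0$ and division is legitimate.
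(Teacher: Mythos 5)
Your proposal is correct and follows exactly the route the paper intends: the paper states this result as an immediate consequence of Proposition~\ref{cor: Corollary 2} ($\lambda_1(\mathbf{J}(f(x))) = a^T(\phi'(b)\odot b)$) together with the equal-coordinate invariant and the interpolation identity $a_i\phi(b_i)=1/k$, and your substitution $k a_1 = 1/\phi(b_1)$ is precisely the omitted one-line computation. The only nitpick is that $\phi(b_1)\neq 0$ follows directly from $a_1\phi(b_1)=1/k$ rather than from the assumption $\phi(z)/\phi'(z)<\infty$, but this does not affect the argument.
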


\begin{example}
From Theorem \ref{thm: closed form 1 hidden layer}, if $\phi(z) = e^{2z}$, then the values of $a_1, b_1$ are given by solving:
\begin{align*}
    a_1^2 &= b_1, \\
    \sqrt{b_1} e^{2 b_1} &= \frac{1}{k}.
\end{align*}
From Theorem \ref{thm: 1 Hidden Layer Eigenvalue}, the top eigenvalue of the Jacobian at the training example is $2b_1$.  Note that the above equation implies $b_1$ decreases as $k$ increases. At $k = 1, \lambda_1 = .6$, at $k=2, \lambda_1 = .28$, and at $k=3, \lambda_1 = .16$.  This nonlinearity guarantees that training example is an attractor.  Moreover, as there are no fixed points other than the training example, there are no spurious attractors.  
\end{example}

\textbf{Remark.}  The analysis of Theorem \ref{thm: 1 Hidden Layer Eigenvalue} implies that attractors arise as a consequence of training and are not simply consequences of interpolation by a neural network with a certain architecture; see the following corollary. 

\begin{corollary}  Let $x \in \mathbb{R}^{k_0}$ with $\|x\|_2 = 1$ and  $f(z) = xa^T \phi(bx^T z)$, where $a, b \in \mathbb{R}^k$ and $\phi$ is a smooth element-wise nonlinearity with $\frac{\phi'(z)}{\phi(z)} < \infty$ for all $z \in \mathbb{R}$, $\left|\frac{\phi'(z) z}{\phi(z)}\right| > 1$ for $z$ in an open interval $\mathcal{O} \subset \mathbb{R}$.  Then there exist infinitely many $b \in \mathbb{R}^k$, such that $f(x) = x$ and $x$ is not an attractor for $f$.
\end{corollary}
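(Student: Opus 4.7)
The plan is to leverage the explicit Jacobian formula derived in Appendix~\ref{appendix: B}. For a map of the form $f(z) = x u^T \phi(v x^T z)$ with $\|x\|_2 = 1$, the same rank-one computation used in the proof of the analog of Corollary~2 shows that $\mathbf{J}(f(x))$ has a single nonzero eigenvalue equal to $u^T(\phi'(v) \odot v)$, while the fixed-point condition $f(x) = x$ collapses to the scalar constraint $u^T \phi(v) = 1$. By Proposition~\ref{prop: Proposition 1}, it therefore suffices to exhibit infinitely many pairs $(u, v) \in \mathbb{R}^k \times \mathbb{R}^k$ with $u^T \phi(v) = 1$ and $|u^T(\phi'(v) \odot v)| > 1$.

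I would construct such pairs as a one-parameter family obtained by collapsing to constant-coordinate vectors. For each scalar $b \in \mathcal{O}$, set $v = b\,\mathbf{1}_k$ and $u = \tfrac{1}{k \phi(b)}\,\mathbf{1}_k$, where $\mathbf{1}_k$ is the all-ones vector in $\mathbb{R}^k$. The hypothesis $\phi'(z)/\phi(z) < \infty$ for all $z \in \mathbb{R}$ implies $\phi$ is nowhere zero, so $u$ is well-defined. Direct substitution gives
\begin{align*}
u^T \phi(v) &= k \cdot \tfrac{1}{k \phi(b)} \cdot \phi(b) = 1, \\
u^T(\phi'(v) \odot v) &= k \cdot \tfrac{1}{k \phi(b)} \cdot \phi'(b) \cdot b = \frac{\phi'(b)\,b}{\phi(b)},
\end{align*}
so the interpolation condition $f(x)=x$ holds and, by the hypothesis $|\phi'(z) z/\phi(z)| > 1$ on $\mathcal{O}$, the unique nonzero eigenvalue of the Jacobian exceeds $1$ in absolute value.

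Since $\mathcal{O}$ is an open interval of $\mathbb{R}$, it contains uncountably many distinct values of $b$, each yielding a distinct vector $v = b\,\mathbf{1}_k$, so we obtain infinitely many qualifying choices. I do not foresee a substantive obstacle: the only subtle point is using the finiteness hypothesis on $\phi'/\phi$ to conclude that $\phi$ is nowhere vanishing, which is what makes the explicit normalization $u = \mathbf{1}_k / (k \phi(b))$ legal. Conceptually, this argument shows that the attractor property observed empirically is not implied by architecture-plus-interpolation alone: at each $b \in \mathcal{O}$ one produces an interpolating rank-one factorization whose Jacobian eigenvalue is arbitrarily large in magnitude, so it is the specific $(u, v)$ selected by gradient descent (as characterized by Theorem~\ref{thm: closed form 1 hidden layer}) that drives the Jacobian eigenvalue below $1$.
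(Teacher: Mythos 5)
Your proposal is correct and follows essentially the same route as the paper's own proof: both restrict to constant-coordinate vectors, enforce interpolation via $a_i\phi(b_i)=1/k$, and invoke the rank-one Jacobian eigenvalue formula $\phi'(b)b/\phi(b)$ together with the hypothesis on $\mathcal{O}$ to violate the attractor condition of Proposition~\ref{prop: Proposition 1}. Your added observation that the finiteness of $\phi'/\phi$ makes the normalization well defined is a slight refinement of the paper's phrasing, which simply conditions on $\phi(b_i)\neq 0$.
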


\begin{proof}
If $a_i = a_j$ and $b_i = b_j$ for all $i, j \in  [k]$, then Propositions \ref{cor: Corollary 1} and \ref{cor: Corollary 2} imply that $f(x) = x$ if:
\begin{align*}
    a_i \phi(b_i) &= \frac{1}{k} \\
    \lambda_1(\mathbf{J}(f(x))) &= \frac{\phi'(b_i) b_i}{\phi(b_i)}.
\end{align*}
However, for any value of $b_i$ such that $\phi(b_i) \neq 0$, we can select a value of $a_i$ such that $a_i \phi(b_i) = \frac{1}{k}$.  Hence, we just select appropriate $a_i$ such that $a_i \phi(b_i) = \frac{1}{k}$ for $b_i \in \mathcal{O}$.
\end{proof}

\section{Analysis of Deep Autoencoders with 1 Training Example}
\label{appendix: C}
Let $f(z) = W_d \phi(W_{d-1} \phi(W_{d-2} \ldots \phi(W_1 z) \ldots ) )$ represent a $d-1$ hidden layer network with elementwise nonlinearity $\phi$ with $z \in \mathbb{R}^{k_0}, W_i \in \mathbb{R}^{k_i \times k_{i-1}}$, and $k_d = k_0$.  We again consider the setting where gradient descent with learning rate $\gamma$ is used to minimize the square loss on 1 training example $x$.  As in the 1 hidden layer case, we derive invariants of training that allow us to derive a closed form solution when training on 1 example in the gradient flow setting.  

Firstly, the following invariant (analogous to Invariant \ref{prop: invariant}) holds in the deep setting.  

\begin{inv}
\label{prop: Invariant Deep}
If $W_d^{(0)} = x{a^{(0)}}^T$ and $W_1^{(0)} = b^{(0)}x^T$ for $b^{(0)}, a^{(0)} \in \mathbb{R}^{k_1}, \mathbb{R}^{k_{d-1}}$ respectively, then for all time-steps t, $W_d^{(t)} = x{a^{(t)}}^T$ and $W_1^{(t)} = b^{(t)}x^T$ for $b^{(t)}, a^{(t)} \in \mathbb{R}^{k_1}, \mathbb{R}^{k_{d-1}}$ respectively. 
\end{inv}

The proof is analogous to that of Invariant \ref{prop: invariant}.  Now that we have invariants of training for layers $W_d, W_1$, we still need to find an invariant for the intermediate layers $W_2, \ldots, W_{d-1}$.  The following invariant extends Invariant \ref{prop:Equal Init} to the deep setting.

\begin{inv}
\label{prop: Equal Init Deep}
Assume $\| x\|_2 = 1$ and $W_d^{(0)} = x{a^{(0)}}^T$, $W_1^{(0)} = b^{(0)}x^T$ for $b^{(0)}, a^{(0)} \in \mathbb{R}^{k_1}, \mathbb{R}^{k_{d-1}}$ respectively.  Let $\mathbf{1}_{m \times n}$ denote the $m \times n$ matrix of all $1's$.  If $a_j^{(0)} = a_1^{(0)}, b_l^{(0)} = b_1^{(0)}, W_i^{(0)} = w_i^{(0)} \mathbf{1}_{k_i \times k_{i-1}}$ with $w_i^{(0)} \in \mathbb{R}$, then for all time-steps $t$, $a_j^{(t)} = a_1^{(t)}, b_l^{(t)} = b_1^{(t)}, W_i^{(t)} = w_i^{(t)} \mathbf{1}_{k_i \times k_{i-1}}$ for $j \in \mathbb{R}^{k_{d-1}}$, $l \in \mathbb{R}^{k_1}$, and $i \in \{2, \ldots, d-1\}$.   
\end{inv}

That is, in the deep setting, the intermediate layers remain rank 1 throughout training, if they are initialized to be a constant times the all 1's matrix.  The proof follows by induction and is analogous to the proof of Invariant \ref{prop: Equal Init Deep}.  

\begin{theorem*}
Let $f(z) = W_d \phi(W_{d-1} \phi(W_{d-2} \ldots \phi(W_1 z) \ldots ) )$ denote a $d-1$ hidden layer network with elementwise nonlinearity $\phi$, with $z \in \mathbb{R}^{k_0}, W_i \in \mathbb{R}^{k_i \times k_{i-1}}$, and $k_d = k_0$.  Let $\{W_i\}_{i=1}^{d}$ be initialized as in Invariants 3, 4.  Gradient descent with learning rate $\gamma$ is used to to minimize the following loss for $1$ training example $x \in \mathbb{R}^{k_0}$ with $\|x\|_2 = 1$:

\begin{align}
    \mathcal{L}(x, f) = \frac{1}{2} \| x - f(x) \|_2^2
\end{align}

Assuming $\frac{\phi(z)}{\phi'(z)} < \infty ~\forall z \in \mathbb{R}$, then as the learning rate $\gamma \rightarrow 0$, we have the following relationships between the weights $a_1^{(t)}, b_1^{(t)}, w_i^{(t)} \in \mathbb{R}$ for $i \in\{1, \ldots d-2\}$:
\begin{align*}
    \frac{{w_1^{(t')}}^2 - {w_1^{(0)}}^2}{2} &= \int\limits_{b_1^{(0)}}^{b_1^{(t')}} \frac{\phi(b_1)}{\phi'(b_1) k_2} db_1, \\
    \frac{{w_{i+1}^{(t)}}^2 - {w_{i+1}^{(0)}}^2}{2} &= \int\limits_{w_i^{(0)}}^{w_{i}^{(t)}} \frac{\phi(w_i k_i \phi(w_{i-1} k_{i-1} \ldots \phi(b_1) \ldots))}{\phi'(w_i k_i \phi(w_{i-1} k_{i-1} \ldots \phi(b_1) \ldots))}  \frac{1}{\phi(w_{i-1} k_{i-1} \ldots \phi(b_1) \ldots) k_{i+1}} dw_i, \\
    \frac{{a_1^{(t)}}^2 - {a_1^{(0)}}^2}{2} &= \int\limits_{w_{d-1}^{(0)}}^{w_{d-2}^{(t)}} \frac{\phi(w_{d-1} k_{d-1} \phi(w_{d-2} k_{d-2} \ldots \phi(b_1) \ldots))}{\phi'(w_{d-1} k_{d-1} \phi(w_{d-2} k_{d-2} \ldots \phi(b_1) \ldots))}  \frac{1}{\phi(w_{d-2} k_{d-2} \ldots \phi(b_1) \ldots)} dw_{d-1}.
\end{align*}
\end{theorem*}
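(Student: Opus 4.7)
The plan is to exploit Invariants 3 and 4 to collapse the deep gradient-flow dynamics onto a finite-dimensional scalar system, and then to eliminate time by forming ratios exactly as in the proof of Theorem \ref{thm: closed form 1 hidden layer}. Under these invariants the network is completely described by the scalars $b_1$, $w_2,\ldots,w_{d-1}$, $a_1$. Because $\|x\|_2=1$, the forward pass at $x$ satisfies $W_1 x = b_1\mathbf{1}_{k_1}$, and inductively each pre-activation is a constant multiple of the all-ones vector, $y_i = \tilde y_i\,\mathbf{1}_{k_i}$, with $\tilde y_1 = b_1$ and $\tilde y_i = w_i k_{i-1}\phi(\tilde y_{i-1})$ for $2\le i\le d-1$. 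Hence $f(x) = a_1 k_{d-1}\phi(\tilde y_{d-1})\,x$, and the residual reduces to $x - f(x) = \alpha x$ with $\alpha := 1 - a_1 k_{d-1}\phi(\tilde y_{d-1})$.

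Next I would carry out backpropagation in this collapsed system. By the same rank-one/constant-row structure preserved by the invariants, each error signal $\delta_i := \partial \mathcal{L}/\partial y_i$ is of the form $\tilde\delta_i\,\mathbf{1}_{k_i}$. Starting from $\tilde\delta_{d-1} = -\alpha a_1\phi'(\tilde y_{d-1})$ and using $W_{i+1}^\top\mathbf{1}_{k_{i+1}} = w_{i+1}k_{i+1}\mathbf{1}_{k_i}$, one obtains the recursion $\tilde\delta_i = w_{i+1}k_{i+1}\phi'(\tilde y_i)\,\tilde\delta_{i+1}$. Taking the gradient-flow limit $\gamma\to 0$, the scalar dynamics read
\begin{align*}
\dot a_1 &= \alpha\,\phi(\tilde y_{d-1}),\qquad \dot b_1 = -\tilde\delta_1,\\
\dot w_i &= -\tilde\delta_i\,\phi(\tilde y_{i-1})\quad (2\le i\le d-1).
\end{align*}

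The key step is then, exactly as in the one-hidden-layer case, to form consecutive ratios that cancel the common factor $\alpha$ and the entire tail of the $\tilde\delta$ recursion. For $2\le i\le d-2$ one computes $\dot w_{i+1}/\dot w_i = \phi(\tilde y_i)/[w_{i+1}k_{i+1}\phi'(\tilde y_i)\phi(\tilde y_{i-1})]$, which rearranges to $w_{i+1}\,dw_{i+1} = \phi(\tilde y_i)/[k_{i+1}\phi'(\tilde y_i)\phi(\tilde y_{i-1})]\,dw_i$. Integrating along the trajectory from $0$ to $t'$ and changing variables $s\mapsto w_i(s)$ converts the right-hand side to an integral in $dw_i$, and substituting the forward-pass recursion $\tilde y_j = w_j k_{j-1}\phi(\tilde y_{j-1})$ puts the integrand into the stated closed form. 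The analogous ratio $\dot w_2/\dot b_1$ produces the first displayed identity (so the ``$w_1$'' appearing in the statement should be read as $w_2$, and the integrand $\phi(b_1)/[\phi'(b_1)k_2]$ matches directly), while $\dot a_1/\dot w_{d-1}$ yields the last identity.

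The main obstacle I anticipate is justifying the change of variables $s\mapsto w_i(s)$ in each integral: one needs $w_i(\cdot)$ to be monotone along the flow so that its inverse is well defined. The assumption $\phi(z)/\phi'(z)<\infty$ keeps every integrand finite, but to get a true single-variable integral one must argue that no coordinate stalls or reverses along the trajectory. The cleanest way is to observe that each $\dot w_i$ carries a definite sign determined by $\mathrm{sign}(\alpha)\cdot\mathrm{sign}(a_1)\cdot\prod \phi'(\tilde y_j)$, which is constant on any interval where these factors do not vanish, and $\alpha$ itself has constant sign until it reaches $0$ (the interpolating state). Once monotonicity is in hand, the chain of integral identities follows by induction on the layer index, iterating the forward-pass recursion to rewrite the integrand at each stage in terms of $w_i,w_{i-1},\ldots,b_1$, which is precisely the nested form stated in the theorem.
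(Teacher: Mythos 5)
Your proposal is correct and is precisely the argument the paper intends: the paper's own ``proof'' is only the remark that it is analogous to Theorem~1, and your collapse to the scalar system $b_1, w_2,\dots,w_{d-1}, a_1$ via Invariants~3 and~4, the backpropagated recursion $\tilde\delta_i = w_{i+1}k_{i+1}\phi'(\tilde y_i)\tilde\delta_{i+1}$, and the cancellation of the common residual factor by taking ratios of consecutive gradient-flow equations before integrating is exactly that analogue (your reading of the statement's indexing, with $w_1$ understood as the first intermediate layer $W_2$, is also the right one). If anything you are more careful than the paper, which silently performs the change of variables in the nested integrals without addressing the parameterization of the lower-layer weights along the trajectory.
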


The proof is analogous to the proof of Theorem \ref{thm: closed form 1 hidden layer}.  In addition, analogously to Theorem \ref{thm: 1 Hidden Layer Eigenvalue}, we can explicitly compute the maximum eigenvalue of the Jacobian at the training example $x$ using the following corollary.

\begin{corollary}
Under the setting of the theorem above, it holds that
\begin{align*}
    \lambda_1(\mathbf{J}(f(x))) = \frac{\left(\prod_{i=2}^{d-1}\phi'(w_{i} k_{i-1} \phi(w_{i-1} k_{i-2}\ldots \phi(b_1) \ldots))w_i k_{i-1}\right) \phi'(b_1) b_1}{\phi(w_{d-1} k_{d-2} \phi(w_{d-2} k_{d-3} \ldots (\phi(b_1)) \ldots))}
\end{align*}
\end{corollary}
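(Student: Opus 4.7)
The plan is to compute the Jacobian of $f$ at $x$ directly from the chain rule and then exploit the rigid rank-one structure that Invariants~3 and~4 impose on every factor, collapsing the product into a rank-one matrix whose unique nonzero eigenvalue is precisely the quantity we want.

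First I would fix notation. Under Invariants~3 and~4, the pre-activation at hidden layer $i$ is a constant vector $z_i = s_i\,\mathbf{1}_{k_i}$, where $s_1 = b_1$ and $s_i = w_i\,k_{i-1}\,\phi(s_{i-1})$ for $2 \le i \le d-1$; these nested $s_i$ are exactly the expressions appearing inside $\phi$ and $\phi'$ in the claimed formula. The interpolation identity $f(x)=x$ forces the scalar condition $k_{d-1}\,a_1\,\phi(s_{d-1}) = 1$, which will be used at the end to eliminate $k_{d-1}a_1$.

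Second, using the chain rule I would write
\[
\mathbf{J}(f(x)) \;=\; W_d\, D_{d-1}\, W_{d-1}\, D_{d-2}\, \cdots\, W_2\, D_1\, W_1,
\]
where $D_i = \phi'(s_i)\, I_{k_i}$ because all entries of $z_i$ agree. Starting from $D_1 W_1 = \phi'(b_1)\,b_1\,\mathbf{1}_{k_1} x^T$ and using that $W_j = w_j\,\mathbf{1}_{k_j\times k_{j-1}}$ collapses any $\mathbf{1}_{k_{j-1}}$ factor on its right into $k_{j-1}\,\mathbf{1}_{k_j}$, I would prove by induction on $j$ that
\[
W_j\,D_{j-1}\,W_{j-1}\,\cdots\,D_1\,W_1 \;=\; \alpha_j\,\mathbf{1}_{k_j}\, x^T,
\qquad
\alpha_1 = b_1,\ \ \alpha_j = w_j\,k_{j-1}\,\phi'(s_{j-1})\,\alpha_{j-1}.
\]
Unrolling the recursion gives $\alpha_{d-1} = b_1 \prod_{i=2}^{d-1} w_i\,k_{i-1}\,\phi'(s_{i-1})$.

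Third, I would apply the last two factors. Multiplying by $D_{d-1}$ introduces $\phi'(s_{d-1})$, and then $W_d = x a^T$ with $a = a_1\,\mathbf{1}_{k_{d-1}}$ produces $a^T\,\mathbf{1}_{k_{d-1}} = k_{d-1}\,a_1$, yielding the rank-one matrix $\mathbf{J}(f(x)) = \lambda\, x x^T$ with $\lambda = k_{d-1}\,a_1\,\phi'(s_{d-1})\,\alpha_{d-1}$. Because the Jacobian has rank one, its only nonzero eigenvalue is $\lambda$ (with eigenvector $x$), so $\lambda_1(\mathbf{J}(f(x))) = \lambda$. Substituting $k_{d-1}\,a_1 = 1/\phi(s_{d-1})$ from the interpolation condition and absorbing $\phi'(s_1)=\phi'(b_1)$ and $\phi'(s_{d-1})$ into the product reindexes the derivative factors so that the remaining product runs over $\phi'(s_i)\,w_i\,k_{i-1}$ for $i=2,\dots,d-1$, which is exactly the stated expression.

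There is no genuine analytic obstacle here: the argument is essentially bookkeeping on top of the rank-one structure forced by the invariants. The one subtle point is the index shift — the diagonal factors naturally produce $\phi'(s_{i-1})$ for $i=2,\dots,d-1$, while the target formula is written in terms of $\phi'(s_i)$ for the same range — so care is needed to peel off the boundary terms $\phi'(b_1)$ and $\phi'(s_{d-1})$ and verify that they land in the right places after regrouping. Everything else (the inductive form of $\alpha_j$, the rank-one collapse at the ends, and the use of the interpolation condition) is routine.
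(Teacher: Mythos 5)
Your proposal is correct and follows essentially the same route the paper intends: the paper proves the corollary "analogously to Theorem 6," i.e., by the same chain-rule computation exploiting the rank-one structure forced by the invariants, collapsing the Jacobian to $\lambda\,xx^T$ and using the interpolation condition $k_{d-1}a_1\phi(s_{d-1})=1$ to produce the denominator. Your bookkeeping (the recursion for $\alpha_j$ and the reindexing of the $\phi'$ factors) checks out and reproduces the stated formula exactly.
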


The proof is analogous to that of Theorem \ref{thm: 1 Hidden Layer Eigenvalue}.  Note that in the theorem above, the integration for later layer weights becomes increasingly complicated, since the integration depends on the values of the previous weights.  Fortunately, for certain nonlinearities, the integral is tractable; see the example below.  

\begin{example}
Let $\phi(x) = x^m$ and assume $k_i = 1$ for $i \in [d-1]$.  Then from the theorem above, it holds that:
\begin{align*}
    w_{i+1}^2 = \frac{w_i^2}{m} ~` \forall i \in [d-1]
\end{align*}
From the corollary above, $\lambda_1(\mathbf{J}(f(x))) = m^{d-1}$.  Hence if $m < 1$, then $x$ becomes an attractor.  
\end{example}

\textbf{Remarks.} In the 1 hidden layer setting, regardless of how large the width, $\lambda_1(\mathbf{J}(f(x))) = m$ when using $\phi(z) = z^m$.  Thus, the example above demonstrates that depth can make over-parameterized autoencoders more contractive even when width cannot.

\section{Trained Autoencoders Produce Outputs in the Span of the \\Training Data}
\label{appendix: D}
In the following section, we generalize Invariant 1 to the setting with multiple training examples and thus, demonstrate that trained autoencoders produce outputs in the span of the training data.

\label{sec: proof for outputs in span of training data}

\begin{inv}
    Let $f(z) = A \phi(B z)$ represent a 1-hidden layer network with elementwise, differentiable nonlinearity $\phi$, $z \in \mathbb{R}^{k_0}$, $B \in \mathbb{R}^{k \times k_0}$, and $A \in \mathbb{R}^{k_0 \times k}$.  Suppose that gradient descent with learning rate $\gamma$ is used to minimize the following loss for the autoencoding problem with $n$ training examples $\{x^{(i)}\}_{i=1}^n$:
\begin{align}
    \mathcal{L}(x, f) = \frac{1}{2} \| x^{(i)} - f(x^{(i)}) \|_2^2.
\end{align}
    If $A^{(0)} = \sum\limits_{i=1}^{n}x^{(i)} {a_i^{(0)}}^T$ and $B^{(0)} = \sum\limits_{i=1}^{n}{b_i^{(0)}} {x^{(i)}}^T$ for vectors $a_i^{(0)}, b_i^{(0)}  \in \mathbb{R}^{k}$, then for all time-steps $t$, it holds that $A^{(t)} = \sum\limits_{i=1}^{n}x^{(i)} {a_i^{(t)}}^T$ and $B^{(t)} = \sum\limits_{i=1}^{n}{b_i^{(t)}} {x^{(i)}}^T$ for some vectors $a_i^{(t)}, b_i^{(t)}  \in \mathbb{R}^{k}$.  
\end{inv}

\begin{proof}
The proof exactly follows the proof of Invariant \ref{prop: invariant}.  For completeness, we show the proof for $A^{(t)}$ below.  We again provide a proof by induction.  The base case follows for $t = 0$ from the initialization.  Now we assume for some $t$ that $A^{(t)} = \sum\limits_{i=1}^{n}x^{(i)} {a_i^{(t)}}^T$ and $B^{(t)} = \sum\limits_{i=1}^{n}{b_i^{(t)}} {x^{(i)}}^T$.  Then for time $t+1$ we have:
\begin{align*}
    A^{(t+1)} &= A^{(t)} + \gamma \sum\limits_{i=1}^{n}(x^{(i)} - A^{(t)} \phi(B^{(t)}x^{(i)})) \phi(B^{(t)}x^{(i)})^T  \\
    &= \sum\limits_{i=1}^{n}x^{(i)} {a_i^{(t)}}^T + \gamma \sum\limits_{i=1}^{n}(x^{(i)} - \sum\limits_{j=1}^{n}x^{(j)} {a_j^{(t)}}^T \phi(B^{(t)}x^{(i)})) \phi(B^{(t)}x^{(i)})^T \\
    &= \sum\limits_{i=1}^{n} x^{(i)} [ {a_i^{(t)}}^T + \gamma (1 - \sum\limits_{j=1}^{n}{a_i^{(t)}}^T \phi(B^{(t)}x^{(j)})) \phi(B^{(t)}x^{(j)})^T ] \\
    &=  \sum\limits_{i=1}^{n} x^{(i)} {a_i^{(t+1)}}^T.
\end{align*}
The proof for $B^{(t)}$ follows analogously.  Hence, since the statement holds for $t+1$, it holds for all time steps, which completes the proof. 
\end{proof}

\section{Proofs of Theorem 3 and 4}
\label{appendix: E}
By analyzing sequence encoders as a composition of maps, we prove that sequence encoding provides a more efficient mechanism for memory than autoencoding.  We begin by restating Theorem \ref{thm: closed from sequence encoding} below.

\begin{theorem}
\label{thm: closed from sequence encoding}
Let $f(z) = W_1 \phi(W_2 z)$ denote a 1-hidden layer network with elementwise nonlinearity $\phi$ and weights $W_1 \in \mathbb{R}^{k_0 \times k}$ and $W_2 \in \mathbb{R}^{k \times k_0}$, applied to $z \in \mathbb{R}^{k_0}$. Let $x^{(i)}, x^{(i+1)} \in \mathbb{R}^{k_0}$ be training examples with $\|x^{(i)}\|_2 = \|x^{(i+1)}\|_2 = 1$. Assuming that $\frac{\phi(z)}{\phi'(z)} < \infty ~\forall z \in \mathbb{R}$ and there exist $u^{(0)}, v^{(0)} \in \mathbb{R}^{k}$ such that $W_1^{(0)} = x^{(i+1)}{u^{(0)}}^T$ and $W_2^{(0)} = v^{(0)}{x^{(i)}}^T$ with $u_i^{(0)} = u_j^{(0)}, v_i^{(0)} = v_j^{(0)} ~\forall i, j \in [k]$, then
gradient descent with learning rate $\gamma\to 0$ applied to minimize
\begin{align}
    \mathcal{L}(x, f) = \frac{1}{2} \| x^{(i+1)} - f(x^{(i)}) \|_2^2
\end{align}
leads to a rank 1 solution
$W_1^{(\infty)} = x^{(i+1)} u^T$ and $W_2^{(\infty)} = v {x^{(i)}}^T$ with $u, v \in \mathbb{R}^{k}$ satisfying
\begin{align*}
  \frac{u_i^2 - {u_i^{(0)}}^2}{2} &= \int_{v_i^{(0)}}^{v_i} \frac{\phi(z)}{\phi'(z)} dz, \quad\textrm{and}\quad
  u_i \phi(v_i) = \frac{1}{k},
\end{align*}
and $u_i = u_j$, $v_i = v_j$ for all $i, j \in [k]$.
\end{theorem}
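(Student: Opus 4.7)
The plan is to mirror the proof of Theorem~\ref{thm: closed form 1 hidden layer} almost verbatim, with the crucial adjustment that the rank-one structure now threads the input direction $x^{(i)}$ through $W_2$ and the output direction $x^{(i+1)}$ through $W_1$. The essential enabling observation is that $\|x^{(i)}\|_2 = \|x^{(i+1)}\|_2 = 1$, so that bilinear contractions with $x^{(i)}$ or $x^{(i+1)}$ against themselves produce a clean scalar $1$, exactly as $\|x\|_2 = 1$ did in the autoencoding proof. This will let the two-direction structure survive every gradient step.

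First, I would establish a sequence-encoding analog of Invariant~\ref{prop: invariant}: if $W_1^{(0)} = x^{(i+1)}{u^{(0)}}^T$ and $W_2^{(0)} = v^{(0)}{x^{(i)}}^T$, then for all $t$ there exist $u^{(t)}, v^{(t)} \in \mathbb{R}^k$ with $W_1^{(t)} = x^{(i+1)}{u^{(t)}}^T$ and $W_2^{(t)} = v^{(t)}{x^{(i)}}^T$. The inductive step writes out the gradient descent updates
\begin{align*}
W_1^{(t+1)} &= W_1^{(t)} + \gamma \bigl(x^{(i+1)} - W_1^{(t)} \phi(W_2^{(t)} x^{(i)})\bigr)\phi(W_2^{(t)} x^{(i)})^T, \\
W_2^{(t+1)} &= W_2^{(t)} + \gamma \diag\!\bigl(\phi'(W_2^{(t)} x^{(i)})\bigr) {W_1^{(t)}}^T \bigl(x^{(i+1)} - W_1^{(t)} \phi(W_2^{(t)} x^{(i)})\bigr) {x^{(i)}}^T,
\end{align*}
and uses $\|x^{(i)}\|_2 = \|x^{(i+1)}\|_2 = 1$ to collapse $W_2^{(t)} x^{(i)} = v^{(t)}$ and ${W_1^{(t)}}^T x^{(i+1)} = u^{(t)}$, so that the left/right factors $x^{(i+1)}$ and ${x^{(i)}}^T$ cleanly factor out. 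This yields the scalar updates
\begin{align*}
u^{(t+1)} &= u^{(t)} + \gamma \bigl(1 - {u^{(t)}}^T \phi(v^{(t)})\bigr)\phi(v^{(t)}), \\
v^{(t+1)} &= v^{(t)} + \gamma \phi'(v^{(t)}) \odot u^{(t)} \bigl(1 - {u^{(t)}}^T \phi(v^{(t)})\bigr),
\end{align*}
which structurally coincide with the autoencoding dynamics in the proof of Invariant~\ref{prop: invariant}.

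Second, I would establish the analog of Invariant~\ref{prop:Equal Init}: if the initialization satisfies $u_i^{(0)} = u_j^{(0)}$ and $v_i^{(0)} = v_j^{(0)}$ for all $i, j \in [k]$, then the coordinatewise updates above preserve this symmetry, since both updates depend only on coordinatewise operations and the shared scalar $1 - {u^{(t)}}^T \phi(v^{(t)}) = 1 - k u_1^{(t)} \phi(v_1^{(t)})$. By induction, $u_i^{(t)} = u_1^{(t)}$ and $v_i^{(t)} = v_1^{(t)}$ for all $t$ and all $i$.

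Third, I would take $\gamma \to 0$ to pass to gradient flow and deduce the conservation law. Dividing the continuous-time ODEs for $u_1$ and $v_1$ and eliminating the common factor $1 - k u_1 \phi(v_1)$ gives
\begin{align*}
u_1 \frac{du_1}{dt} = \frac{\phi(v_1)}{\phi'(v_1)} \frac{dv_1}{dt},
\end{align*}
and integrating in $t$ (using the hypothesis $\phi(z)/\phi'(z) < \infty$ for absolute continuity) yields the stated integral relation $\tfrac{1}{2}(u_i^2 - {u_i^{(0)}}^2) = \int_{v_i^{(0)}}^{v_i} \phi(z)/\phi'(z)\,dz$. The interpolation condition at $t = \infty$ reads $x^{(i+1)} = f(x^{(i)}) = x^{(i+1)} u^T \phi(v) = x^{(i+1)} \cdot k u_1 \phi(v_1)$, which gives $u_i \phi(v_i) = 1/k$.

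The main (and essentially only) subtlety is verifying that the left factor of $W_1$ genuinely stays aligned with $x^{(i+1)}$ rather than $x^{(i)}$ once the target and source differ; this is resolved purely by the unit-norm assumption on both examples, which makes the outer-product structure of the gradient updates close under composition. Beyond this bookkeeping, everything else is an essentially mechanical repeat of the autoencoding argument.
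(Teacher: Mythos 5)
Your proposal is correct and follows essentially the same route as the paper, which simply notes that the proof of Theorem~\ref{thm: closed form 1 hidden layer} carries over verbatim because the scalar updates to $u, v$ do not depend on the data $x^{(i)}, x^{(i+1)}$. Your write-up fleshes out exactly this point: the unit-norm assumptions collapse $W_2^{(t)}x^{(i)} = v^{(t)}$ and ${W_1^{(t)}}^T x^{(i+1)} = u^{(t)}$, so the $(u,v)$ dynamics coincide with the autoencoding case and the rest of the argument is unchanged.
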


The proof exactly follows that of Theorem \ref{thm: closed form 1 hidden layer}, since updates to $u, v$ do not depend on the data $x^{(i)}, x^{(i+1)}$.      

\begin{theorem}
\label{thm: Sequence Encoding Eigenvalues}
Let $\{x^{(i)}\}_{i=1}^n$ be $n$ training examples with $\| x^{(i)} \|_2 = 1$  for all $i \in [n]$, and let $\{f_i\}_{i=1}^{n}$ denote $n$ 1-hidden layer networks satisfying the assumptions in Theorem \ref{thm: closed from sequence encoding} and trained on the loss in Eq.~\eqref{loss_sequence}.  Then the composition $f = f_n \circ f_{n-1} \circ \ldots \circ f_1$ satisfies:
\begin{align}
    \lambda_1(\mathbf{J}(f(x^{(1)}))= \prod\limits_{i=1}^{n}\left( \frac{\phi'(v_j^{(i)}) v_j^{(i)}}{\phi(v_j^{(i)})}  \right).
\end{align}
\end{theorem}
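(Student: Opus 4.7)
\textbf{Proof proposal for Theorem \ref{thm: Sequence Encoding Eigenvalues}.}

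The plan is to apply Theorem \ref{thm: closed from sequence encoding} to pin down the exact rank-one form of each trained network $f_i$, compute each individual Jacobian $\mathbf{J}(f_i(x^{(i)}))$, and then use the chain rule. The telescoping of the resulting product of outer products, together with the unit-norm assumption $\|x^{(i)}\|_2 = 1$, should collapse the full Jacobian into a single rank-one matrix whose only nonzero eigenvalue is precisely the product claimed.

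More concretely, by Theorem \ref{thm: closed from sequence encoding}, each $f_i(z) = x^{(i+1)} {u^{(i)}}^T \phi(v^{(i)} {x^{(i)}}^T z)$ with all entries of $u^{(i)}$ (resp.\ $v^{(i)}$) equal to a common value, which I will denote $u_1^{(i)}$ (resp.\ $v_1^{(i)}$), and satisfying $k\, u_1^{(i)} \phi(v_1^{(i)}) = 1$. Differentiating $f_i$ at $z = x^{(i)}$ and using $\|x^{(i)}\|_2 = 1$, one gets
\begin{equation*}
    \mathbf{J}(f_i(x^{(i)})) \;=\; x^{(i+1)} \bigl(u^{(i)} \odot \phi'(v^{(i)})\bigr)^{T} v^{(i)} \, {x^{(i)}}^T \;=\; c_i \, x^{(i+1)} {x^{(i)}}^T,
\end{equation*}
where, by the equal-entry property and the constraint $u_1^{(i)} \phi(v_1^{(i)}) = 1/k$,
\begin{equation*}
c_i \;=\; k\, u_1^{(i)} \phi'(v_1^{(i)}) v_1^{(i)} \;=\; \frac{\phi'(v_1^{(i)})\, v_1^{(i)}}{\phi(v_1^{(i)})}.
\end{equation*}

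Next, by the chain rule applied along the orbit $x^{(1)} \mapsto x^{(2)} \mapsto \cdots \mapsto x^{(n)} \mapsto x^{(1)}$,
\begin{equation*}
    \mathbf{J}(f(x^{(1)})) \;=\; \mathbf{J}(f_n(x^{(n)})) \cdots \mathbf{J}(f_1(x^{(1)})) \;=\; \Bigl(\prod_{i=1}^{n} c_i\Bigr) \, x^{(n+1)} {x^{(n)}}^T x^{(n)} \cdots {x^{(2)}}^T x^{(2)} {x^{(1)}}^T.
\end{equation*}
Each interior product ${x^{(i)}}^T x^{(i)}$ equals $1$ by the unit-norm hypothesis, and $x^{(n+1)} = x^{(1)}$ by the cyclic indexing, so the expression collapses to
\begin{equation*}
    \mathbf{J}(f(x^{(1)})) \;=\; \Bigl(\prod_{i=1}^{n} c_i\Bigr) x^{(1)} {x^{(1)}}^T.
\end{equation*}
This is a rank-one matrix whose sole nonzero eigenvalue, with eigenvector $x^{(1)}$ (which has unit norm), is $\prod_{i=1}^n c_i$. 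Substituting the closed form for $c_i$ gives the stated formula.

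The only nontrivial step is the Jacobian computation for a single $f_i$, where one must carefully carry the diagonal $\phi'$ term through the rank-one outer-product structure; the equal-entry conclusion of Theorem \ref{thm: closed from sequence encoding} is what makes the quadratic form $(u^{(i)} \odot \phi'(v^{(i)}))^T v^{(i)}$ collapse to $k\, u_1^{(i)} \phi'(v_1^{(i)}) v_1^{(i)}$, and hence what lets the constraint $u_1^{(i)} \phi(v_1^{(i)}) = 1/k$ eliminate $u_1^{(i)}$ from the final expression. Everything after that is bookkeeping with the chain rule; the telescoping is immediate because the Jacobians are outer products aligned along the training cycle.
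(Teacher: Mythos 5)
Your proposal is correct and takes essentially the same route as the paper's proof: both use the closed form of each $f_i$ from Theorem \ref{thm: closed from sequence encoding}, the unit-norm assumption to collapse the inner products ${x^{(i)}}^T x^{(i)}$, and the interpolation constraint $u_j^{(i)}\phi(v_j^{(i)}) = 1/k$ to eliminate $u$ and arrive at the rank-one Jacobian $\bigl(\prod_i c_i\bigr)x^{(1)}{x^{(1)}}^T$. The only cosmetic difference is that you form the individual Jacobians $\mathbf{J}(f_i(x^{(i)})) = c_i\, x^{(i+1)}{x^{(i)}}^T$ and telescope them via the chain rule, whereas the paper first writes out the nested composition explicitly and differentiates it in one shot.
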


\begin{proof}
From Theorem \ref{thm: closed from sequence encoding}, each of the $f_i$ for $i \in [n]$ have the following form after training:
\begin{align*}
    f_i(z) = x^{(i\hspace{-2mm}\mod n + 1)} {u^{(i)}}^T \phi(v^{(i)} {x^{(i)}}^T z)
\end{align*}
with ${u^{(i)}}^T \phi(v^{(i)}) = 1$ and $u_l^{(i)} = u_j^{(i)}, v_l^{(i)} = v_j^{(i)}$ for $l, j \in [k]$ and $i \in [n]$.

Hence the composition $f$ is given by:
\begin{align*}
    f(z) &= (f_n \circ f_{n-1} \circ \ldots \circ f_1)(z) \\
    &= x^{(1)} {u^{(n)}}^T \phi(v^{(n)} {x^{(n)}}^T x^{(n)} {u^{(n-1)}}^T \phi(v^{(n-1)} x^{(n-1)} \ldots \phi(v^{(1)} {x^{(1)}}^T z) \ldots)) \\
    &= x^{(1)} {u^{(n)}}^T \phi(v^{(n)}  {u^{(n-1)}}^T \phi(v^{(n-1)} \ldots \phi(v^{(1)} {x^{(1)}}^T  z) \ldots)) ~~~~ \text{since $\|x^{(i)}\|_2 = 1$ for all $i \in [n]$.}
\end{align*}

We now compute the Jacobian of $f$ at the example $x^{(1)}$.  Since $u_i k \phi(v_i) = 1$ for all $i \in [n]$ and $\|x^{(1)}\|_2 = 1$, it holds that
\begin{align*}
    \mathbf{J}(f(x^{(1)})) = x^{(1)} \left(\prod_{i=1}^{n} u_j^{(i)} k \phi'(v_j^{(i)}) v_j^{(i)} \right) {x^{(1)}}^T.
\end{align*}

However, we know that $u_j^{(i)} = \frac{1}{k \phi(v_j^{(i)})}$, and so we have that
\begin{align*}
    \lambda_1(\mathbf{J}(f(x^{(1)}))= \prod\limits_{i=1}^{n}\left( \frac{\phi'(v_j^{(i)}) v_j^{(i)}}{\phi(v_j^{(i)})}  \right),
\end{align*}
which completes the proof.
\end{proof}

\section{Limit Cycles in Recurrent Neural Networks}
\label{appendix: F}
In order to demonstrate that recurrent neural networks (RNNs) can also memorize and recall sequences, we trained a vanilla RNN (whose architecture is detailed in Appendix Figure \ref{appendix: fig 6}) to encode the following sentence from our introduction: ``Hopfield networks are able to store binary training patterns as attractive fixed points.''  When training the RNN, we encoded each word using 1-hot representation i.e., since there are 13 words in the sentence, we represented each word with a vector of size 13 and placed a ``1'' in the index corresponding to the word.  

We trained such that each word is mapped to the next modulo 13 using the Cross Entropy Loss (as is done in practice).  Unlike the other settings considered in this work, RNNs are used to generate new sentences after training by sampling a new word from the vector output given a previous word\footnote{This process is usually started from inputting the all zero vector.}.  Under our architecture, we found that repeatedly choosing the highest probability word given the previous word consistently output the entire training sentence regardless of the number of times this sampling process was repeated.

\section{Sequence Encoding Audio}
\label{appendix: G}
In order to demonstrate that fully connected networks can memorize high dimensional sequences, we captured an audio clip of an 8 second recording from the Donald Trump talking pen.  Each second of the audio contains $22,050$ frequencies, and we trained a fully connected network to map from the frequencies in second $i$ to second $i+1 \mod 8$.  We have attached an audio sample\footnote{Located at: \url{https://github.com/uhlerlab/neural_networks_associative_memory}} titled ``trump\_quote\_recovered\_from\_noise.mp3'' demonstrating that iteration from random noise leads to recovery of the entire quote.  The full architecture used is provided in Appendix Figure \ref{appendix: fig 6}.

\begin{figure}[!ht]
\centering
\includegraphics[height=5.in]{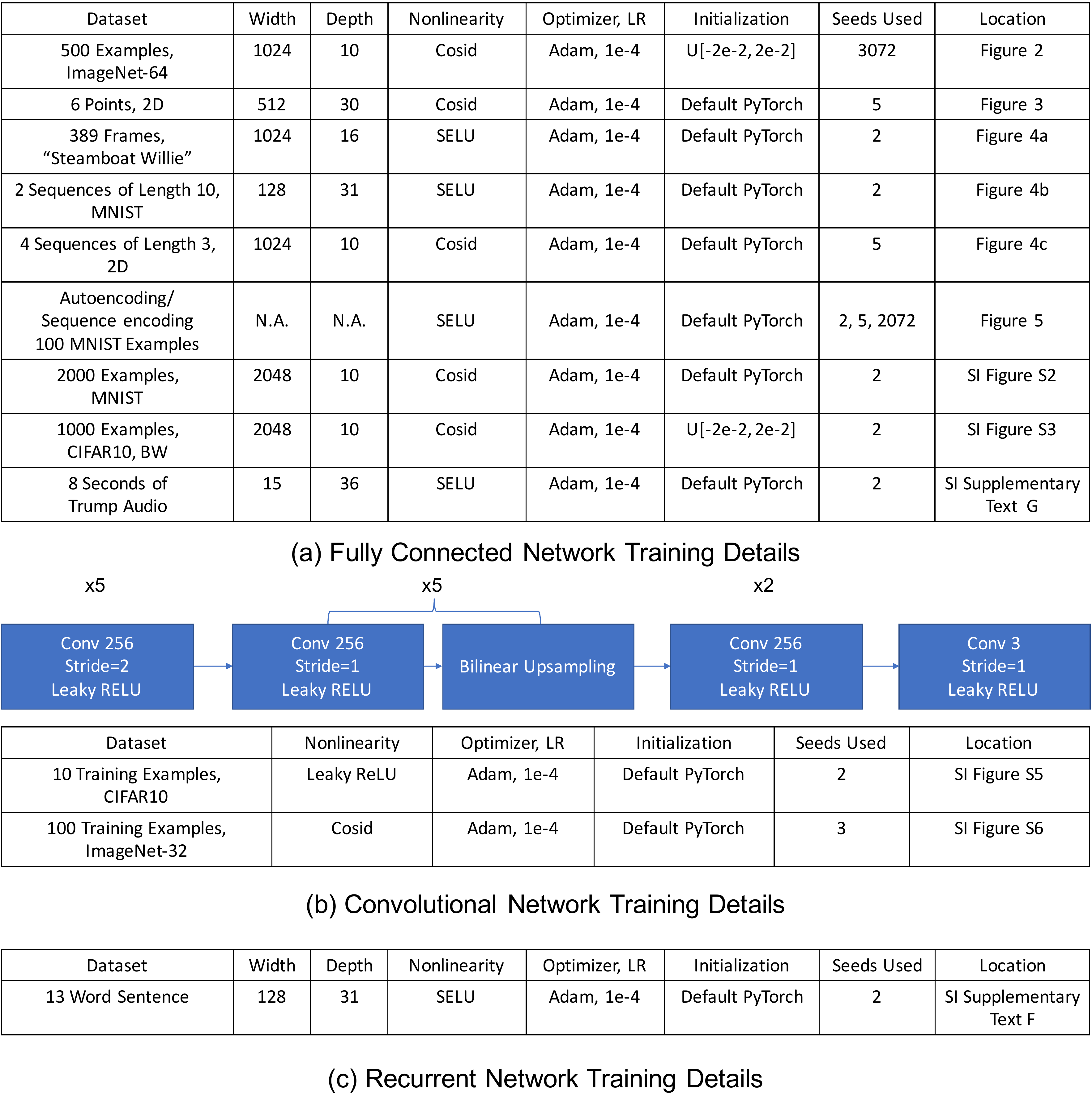}
\caption{Training details for all experiments in main text and Appendix.  Unless otherwise stated, all fully connected and convolutional networks are trained to minimize mean squared error below $10^{-8}$.  (a) Training details for fully connected architectures including dataset description, network width, network depth, nonlinearity, optimization method, learning rate, initialization scheme, random seeds used, and reference to the experiment in the text. (b) Training details for convolutional architecture including network topology, dataset, nonlinearity, optimization method, learning rate, initialization scheme, random seed and reference to experiment in the text.  (c) Training details for recurrent architecture including network width and depth (for producing the next hidden state and output state), nonlinearity, optimization method, learning rate, initialization scheme, random seed used, and reference to experiment in the text.} 
\label{appendix: fig 6}
\end{figure}

\begin{figure}
\centering
\includegraphics[height=1.5in]{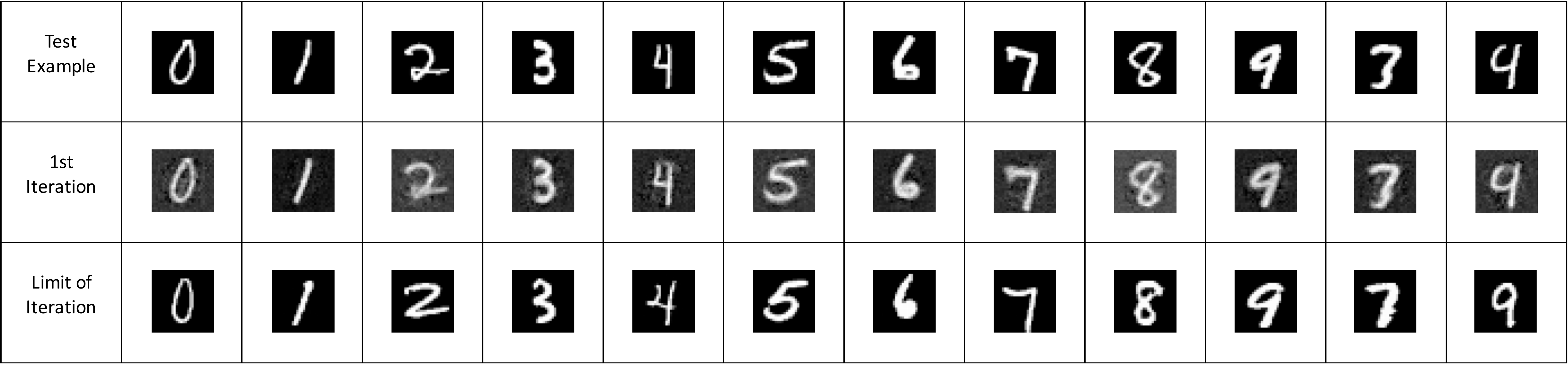}
\caption{Network trained on 2000 examples from MNIST stores all training examples as attractors.  1 iteration of the network leads to good reconstruction of the test example, but taking the limit of iteration leads to recovery of training examples.  Average reconstruction error after 1 iteration of 58000 test examples is $0.0135$. Training details are provided in Appendix Figure \ref{appendix: fig 6}.} 
\label{appendix: fig 7}
\end{figure}

\begin{figure}
\centering
\includegraphics[height=4in]{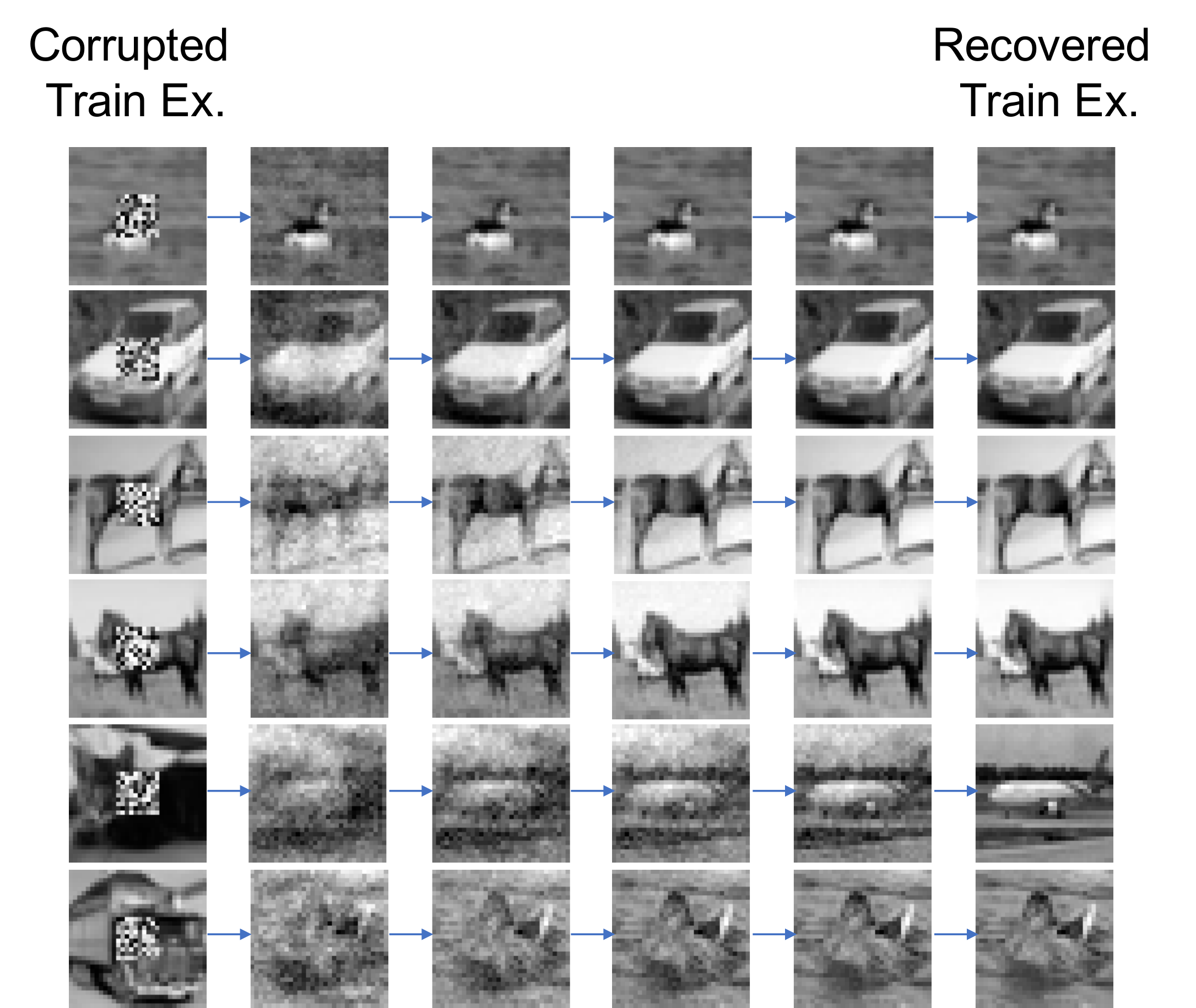}
\label{fig: CIFAR10 BW}
\caption{Network trained on 1000 black and white images from CIFAR10 stores all training examples as attractors.  Analogously to Figure \ref{fig: Figure 2}a of the main text, as training examples are attractors, iteration from corrupted inputs results in the recovery of a training example.  Training details are provided in Appendix Figure \ref{appendix: fig 6}.} 
\label{appendix: fig 8}
\end{figure}

\begin{figure}
\centering
\includegraphics[height=2.5in]{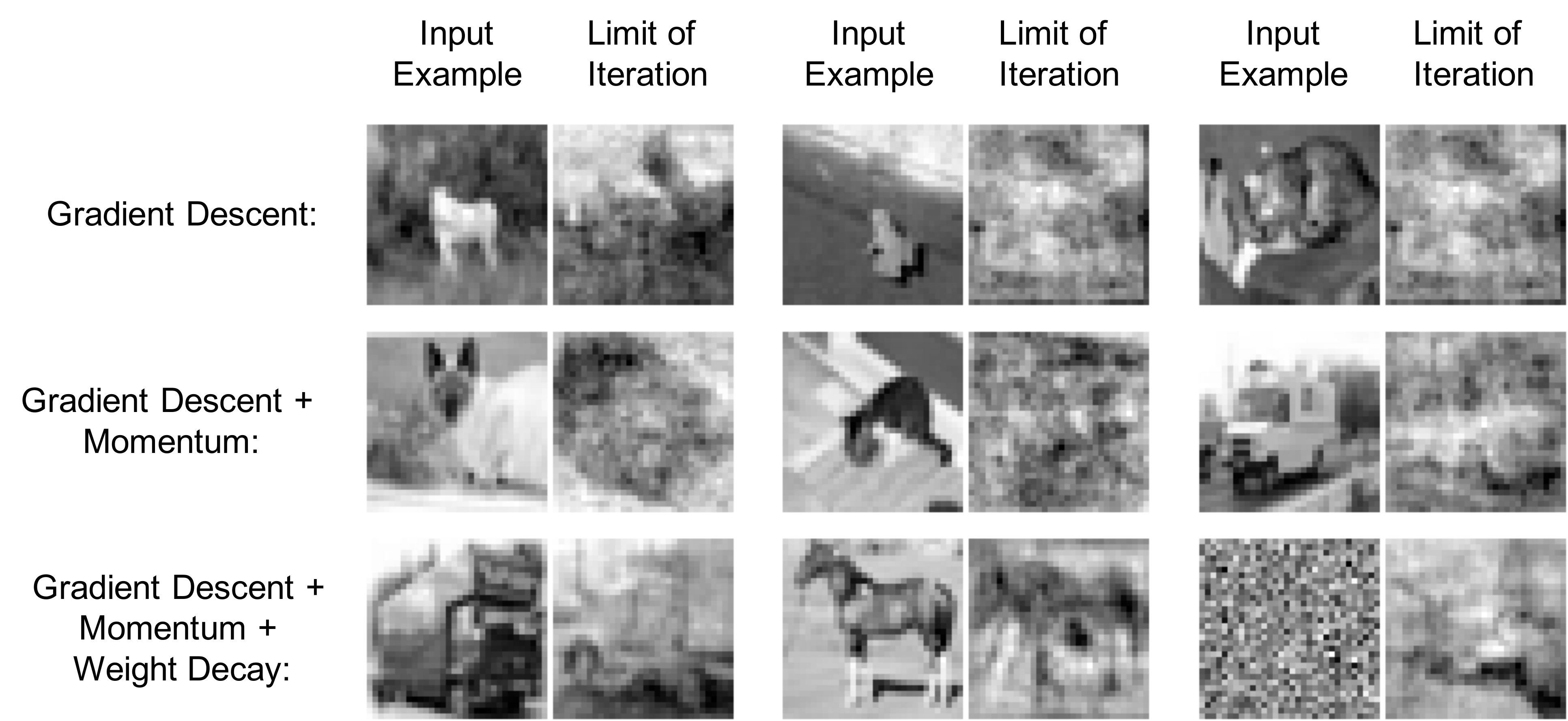}
\label{fig: Spurious Attractors}
\caption{Examples of spurious training examples arising in over-parameterized autoencoders trained using different optimization methods on 100 black and white examples from CIFAR10.  The networks used have 11 hidden layers, 256 hidden units per layer, SELU nonlinearity, and are initialized using the default PyTorch initialization scheme.  For all optimization methods, we use a learning rate of $10^{-1}$.  We use a momentum value of $0.009$ and weight decay of $0.0001$.} 
\label{appendix: fig 9}
\end{figure}

\begin{figure}
\centering
\includegraphics[height=3.in]{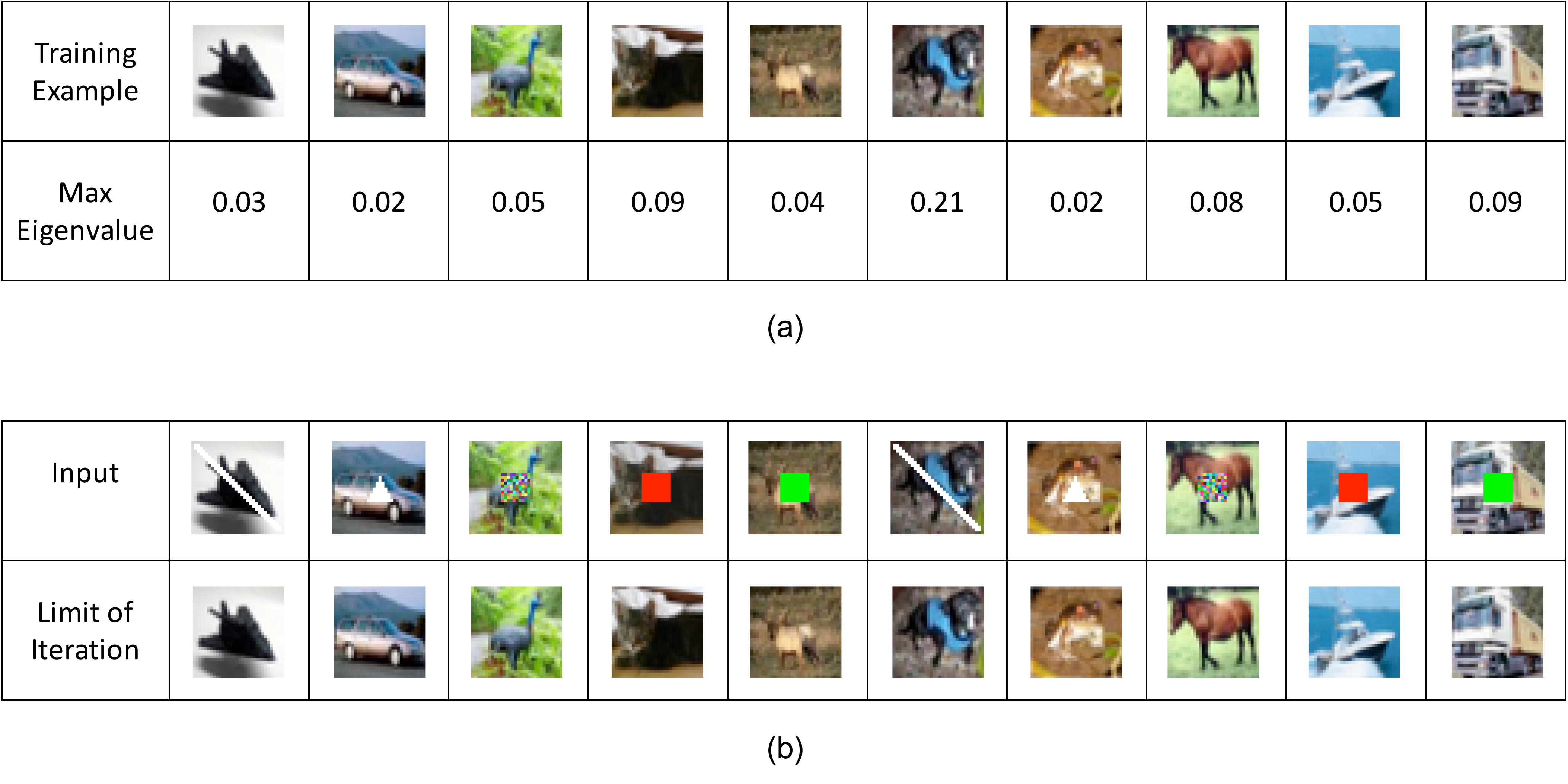}
\caption{A U-Net convolutional autoencoder \cite{UNet, DeepImagePrior} storing 10 CIFAR10 training examples as attractors. Training details are presented in Appendix Figure \ref{appendix: fig 6}b. (a) Maximum eigenvalue of the Jacobian of the network at the training example.  (b) Iteration from corrupted inputs converges to a training example.  In general, we observe that convolutional autoencoders store training examples as attractors when the receptive field size of the hidden units in the network covers the entire image.  This can be achieved by increasing the stride of the filters, as is done in the U-Net used here.}
\label{appendix: fig 10}
\end{figure}

\begin{figure}
\centering
\includegraphics[height=2.5in]{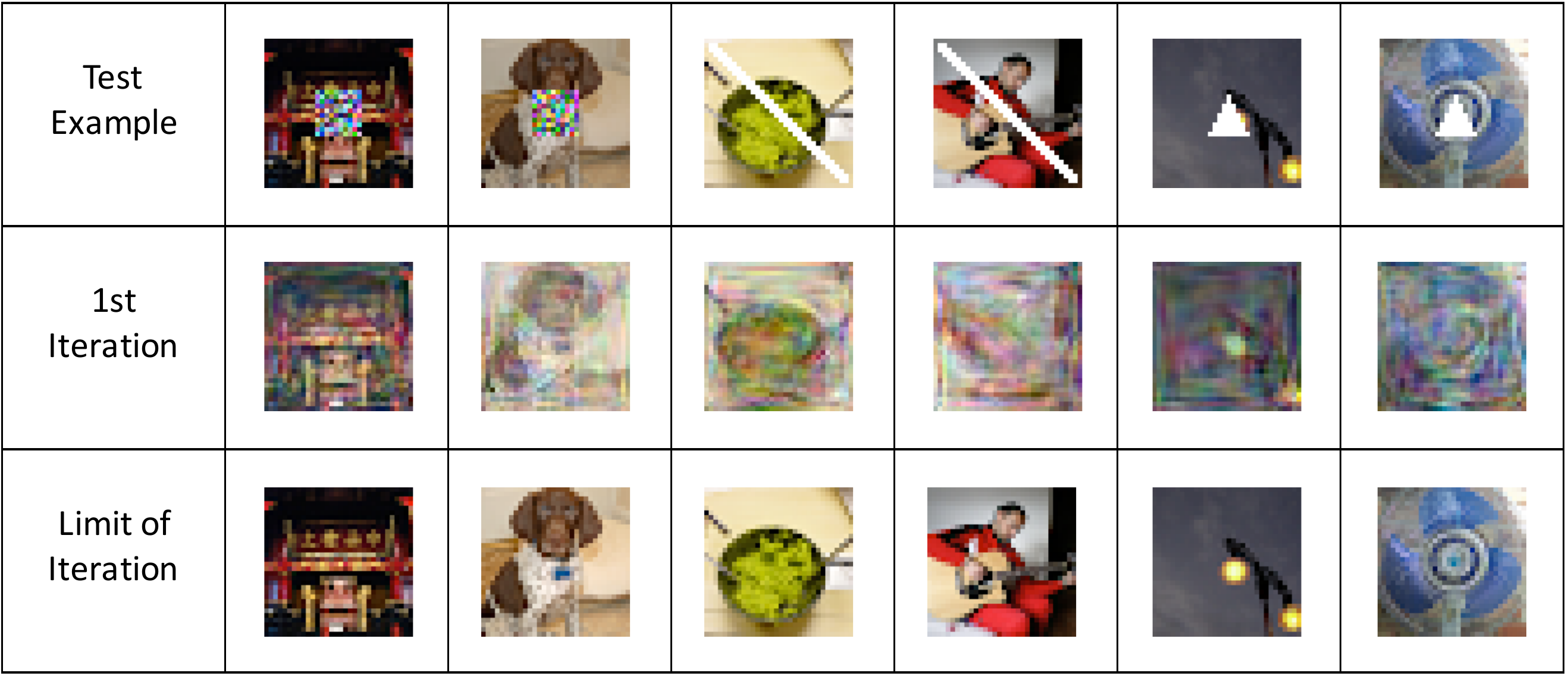}
\caption{A U-Net convolutional autoencoder \cite{UNet, DeepImagePrior} storing 100 ImageNet-32 training examples as attractors. Training details are presented in Appendix Figure \ref{appendix: fig 6}b. Iteration from corrupted inputs converges to a training example.}
\label{appendix: fig 11}
\end{figure}

\begin{figure}
\centering
\includegraphics[height=4.8in]{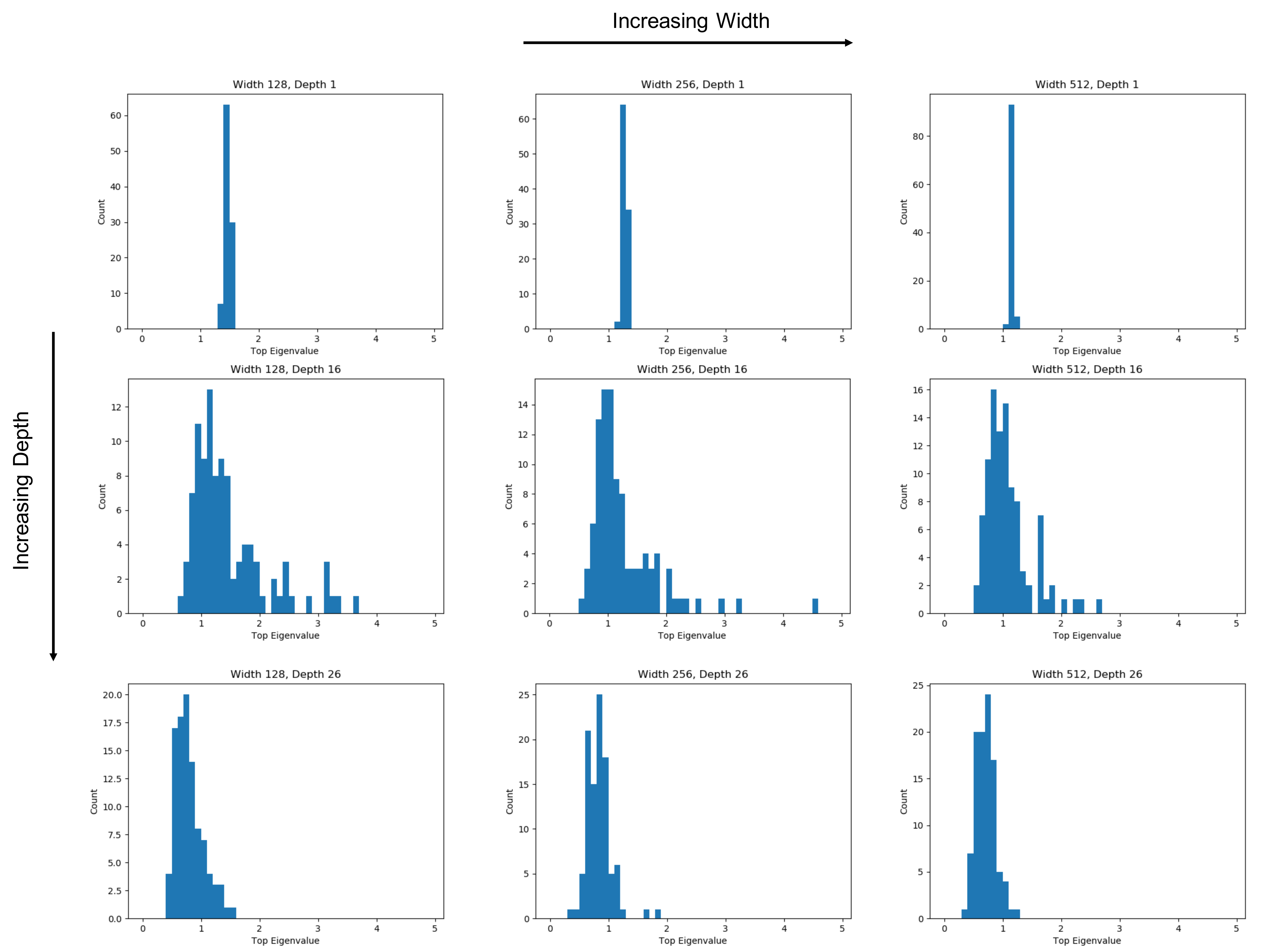}
\caption{Over-parameterized autoencoders become more contractive at the training examples as network depth and width are increased.  Networks are trained on 100 examples from MNIST and are a subset of architectures considered in Figure \ref{fig: Figure 5}a.  Histograms of the maximum eigenvalue of the Jacobian at each of the 100 training examples are presented for each of the nine settings.} 
\label{appendix: fig 12}
\end{figure}

\begin{figure}
\centering
\includegraphics[height=4.8in]{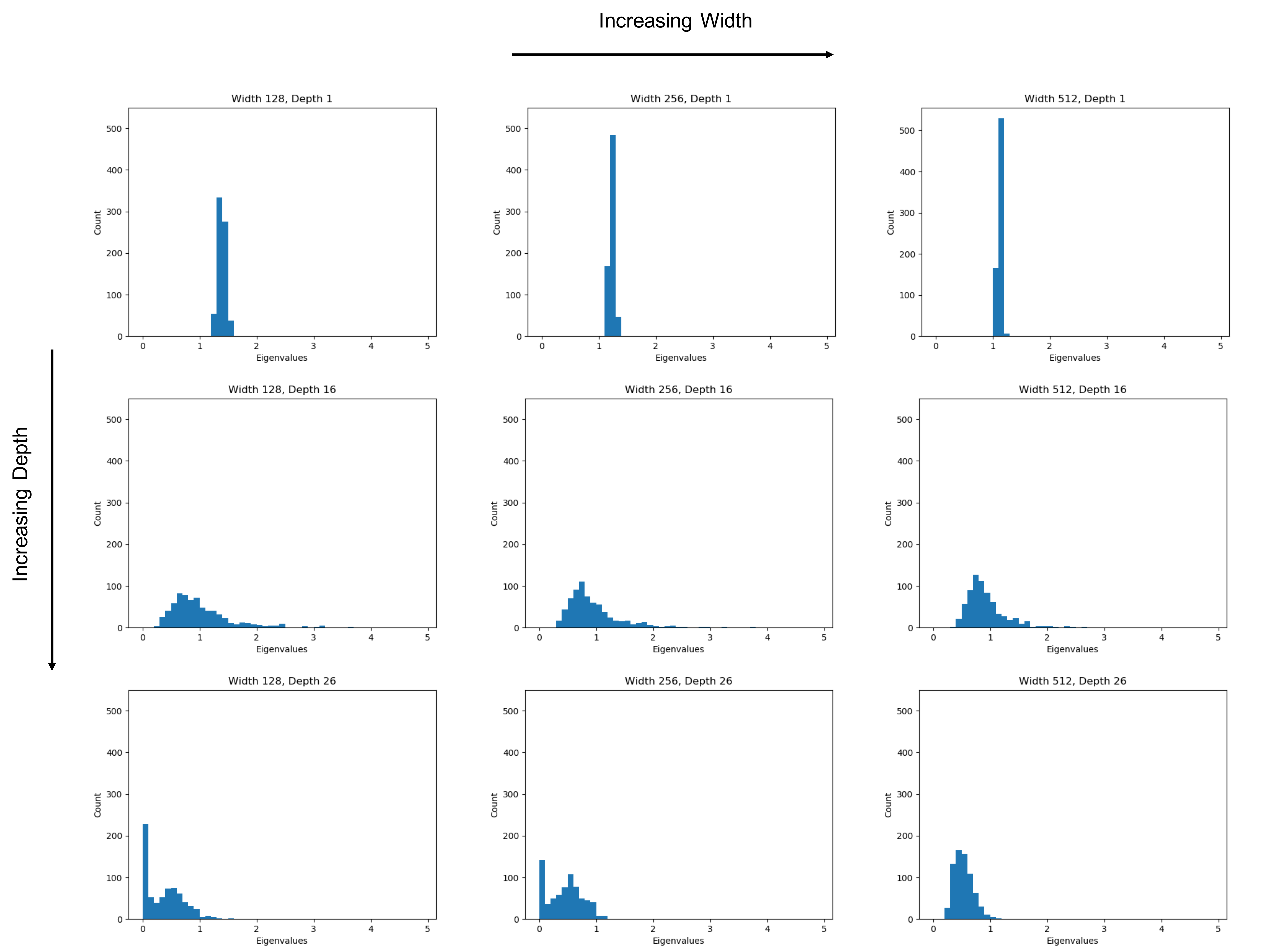}
\caption{Over-parameterized autoencoders become more contractive at the training examples as network depth and width are increased.  Networks are trained on 100 examples from MNIST and are a subset of architectures considered in Figure \ref{fig: Figure 5}a.  Histograms of top 1\% of $28^2$ eigenvalues of the Jacobian at each of the 100 training examples are presented for each of the nine settings. The variance of the distribution of eigenvalues decreases as width increases.} 
\label{appendix: fig 13}
\end{figure}

\begin{figure}
\centering
\includegraphics[height=2in]{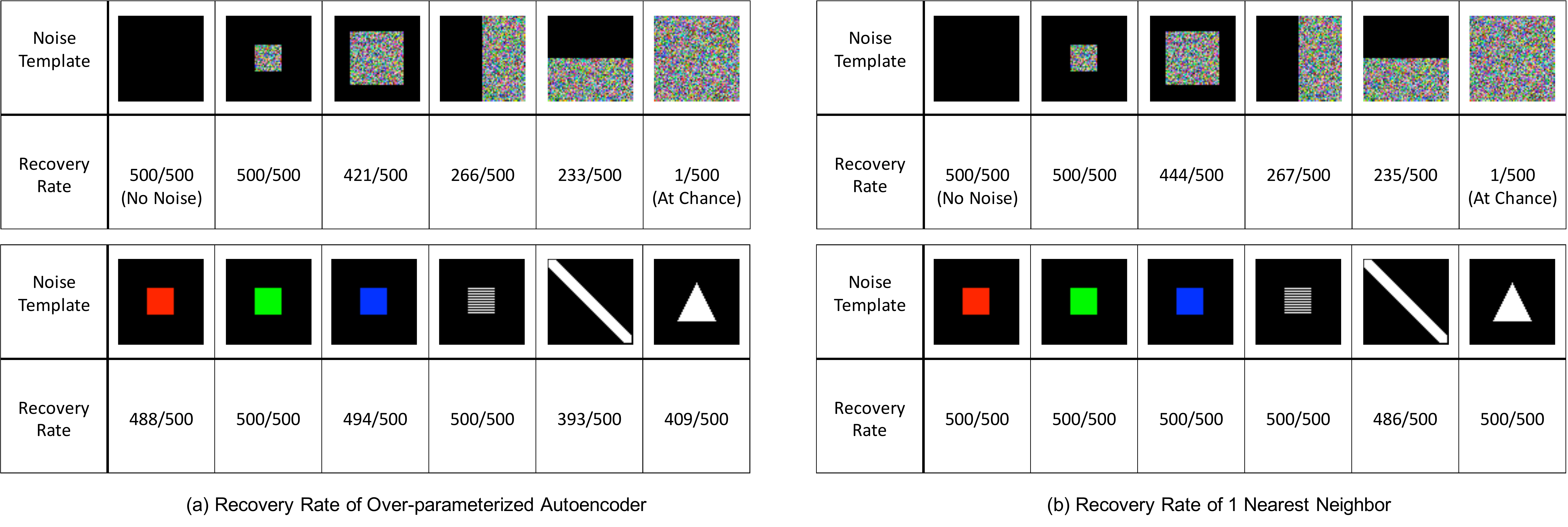}
\label{fig: Nearest Neighbor Comparison}
\caption{Comparison of recovery rate of over-parameterized autoencoder trained on 500 examples from ImageNet-64 (Figure 1 of main text) and 1 nearest neighbor (1-NN).  From Figure 2 of the main text, we know that over-parameterized autoencoders use metrics other than Euclidean distance to construct basins of attraction around training examples.  However, in this high dimensional setting, the metric used by the over-parameterized autoencoder is similar to that of 1-NN, as is demonstrated by the similar recovery rates from varying corruption patterns.} 
\label{appendix: fig 14}
\end{figure}

\newpage
\begin{figure}
    \scriptsize
    \centering
    \begin{tabular}{|>{\centering}m{1.8cm} V{3} c | c| c| c| c| c|}
    \hline
    \backslashbox[22mm]{Opt.}{Act.} & ReLU & Leaky ReLU & SELU & Swish & $\cos x - x$ & $ x + \frac{\sin 10x}{5}$\\
    \Xhline{3\arrayrulewidth}
    GD & 28/100 & 34/100 & 10/100 & $\text{NA}^{*}$ & 5/100 & 19/100 \\
    \hline
    GD + Momentum & 14/100 & 23/100 & 10/100 & $\text{NA}^{*}$ & 2/100 & 21/100 \\
    \hline
    GD + Momentum + Weight Decay & $\text{NA}^{*}$ & $\text{NA}^{*}$ & 18/100 & $\text{NA}^{*}$ & 22/100 & $\text{NA}^{*}$\\
    \hline
    RMSprop & 97/100 & 98/100 & 100/100 & 49/100 & 100/100 & 100/100\\
    \hline
    Adam & 38/100 & 53/100 & 30/100 & 14/100 & 100/100 & 100/100\\
    \hline
    \end{tabular}
    \vspace{3mm}
    \caption{Impact of optimizer and nonlinearity on number of training examples stored as attractors.  In all experiments, we used a fully connected network with 11 hidden layers, 256 hidden units per layer, and default PyTorch initialization. $(*)~ \text{NA}$ indicates that the training error did not even decrease below $10^{-5}$ in 1,000,000 epochs. Although more attractors arise with adaptive methods and trignometric nonlinearities, attractors arise in all settings considered.  Note that we used a loss threshold of $10^{-5}$ for this table since the non-adaptive methods could not converge to $10^{-8}$ in 1,000,000 epochs.}
    \label{appendix: fig 15}
\end{figure}

\begin{figure}
    \scriptsize
    \centering
    \begin{tabular}{|c V{3} c|c|c|c|c|c|}
    \hline
    \backslashbox[21mm]{Init.}{Act.} & ReLU & Leaky ReLU & SELU & Swish & $\cos x - x$  & $x + \frac{\sin 10x}{5}$\\
    \Xhline{3\arrayrulewidth}
    U$[-0.01, 0.01]$ & 62/100 & 78/100 & 78/100 &  16/100 & 26/100 & 93/100\\ 
    \hline 
    U$[-0.02, 0.02]$ & 43/100 & 65/100 & 71/100 & 20/100 & 31/100 & 70/100 \\ 
    \hline 
    U$[-0.05, 0.05]$ & 55/100 & 55/100 & 29/100 & 32/100 & 100/100 & 89/100\\
    \hline 
    U$[-0.1, 0.1]$ & 36/100 & 43/100 & 13/100 & 30/100 & 100/100 & $\text{NA}^*$ \\
    \hline 
    U$[-0.15, 0.15]$ & 34/100 & 38/100 & 13/100 & 6/100 & 100/100 & $\text{NA}^*$\\
    \hline
    \end{tabular}
    \vspace{3mm}
    \caption{Impact of initialization on number of training examples stored as attractors.  In all experiments, we used a fully connected network with 11 hidden layers and 256 hidden units per layer trained using the Adam optimizer (lr=$10^{-4}$).  $(*)~ \text{NA}$ indicates that the training error did not decrease below $10^{-8}$ in 1,000,000 epochs.  Attractors arise under all settings for which training converged.  Generally, more attractors arise under smaller initializations or with trignometric nonlinearities.}
    \label{appendix: fig 16}
\end{figure}










\end{document}